\newtcbox{\mymath}[1][]{%
    nobeforeafter, math upper, tcbox raise base,boxrule=1pt, arc=2mm,
    enhanced, boxsep=4pt, left=0pt,right=0pt,top=0pt,bottom=0pt,
        colframe=blue!40!black!50,colback=blue!05,#1}
\newcommand{\ie}{{\em i.e.,~}}
\newcommand{\eg}{{\em e.g.,~}}
\title{Self-Consuming Generative Models with Curated Data Provably Optimize Human Preferences}
\author{%
  Damien Ferbach$^{1, 2}$\thanks{Corresponding author: \texttt{ferbach.damien@gmail.com}}, Quentin Bertrand$^{1}$, Avishek Joey Bose$^3$, Gauthier Gidel$^{1,4}$ \\
  $^1$Mila, Université de Montréal $^2$Ecole Normale Supérieure de Paris\\
  $^3$University of Oxford, $^4$ Canada CIFAR AI Chair
}
\def\rlhf{{\mathrm{RLHF}}}
\def\sft{{\mathrm{SFT}}}
\def\BT{{\mathcal{B}\mathcal{T}}}
\def\gB{{\mathcal{B}}}
\def\gD{{\mathcal{D}}}
\def\gF{{\mathcal{F}}}
\def\gO{{\mathcal{O}}}
\def\gP{{\mathcal{P}}}
\def\gU{{\mathcal{U}}}
\def\gW{{\mathcal{W}}}
\def\sE{{\E}}
\def\sN{{\mathbb{N}}}
\def\sP{{\mathbb{P}}}
\def\sR{{\mathbb{R}}}
\def\sR{{\mathds{R}}}
\newcommand{\Var}{\mathrm{Var}}
\def\floor#1{\lfloor #1 \rfloor}
\newcommand{\pdata}{p_{\rm{data}}}
\newcommand{\prefer}{p_{\rm{ref}}}
\newcommand{\E}{\mathbb{E}}
\newcommand{\KL}{D_{\mathrm{KL}}}
\newcommand{\xhdr}[1]{{\noindent\bfseries #1}.}
\renewcommand{\epsilon}{\varepsilon}
\DeclareMathOperator*{\argmax}{arg\,max}
\newcommand{\cut}[1]{}
\crefname{assumption}{assumption}{assumptions}
\Crefname{theorem}{Theorem}{Theorems}
\crefname{figure}{figure}{figures}
\crefname{property}{property}{properties}
\crefname{lemma}{lemma}{lemmas}
\crefname{equation}{equation}{equations}
\crefname{corollary}{corollary}{corollaries}
\begin{document}

\maketitle

\begin{abstract}
    \looseness=-1
    The rapid progress in generative models has resulted in impressive leaps in generation quality, blurring the lines between synthetic and real data. Web-scale datasets are now prone to the inevitable contamination by synthetic data, directly impacting the training of future generated models.
    Already, some theoretical results on self-consuming generative models (a.k.a., iterative retraining) have emerged in the literature, showcasing that either model collapse or stability could be possible depending on the fraction of generated data used at each retraining step.
    However, in practice, synthetic data is often subject to human feedback and curated by users before being used and uploaded online. For instance, many interfaces of popular text-to-image generative models, such as Stable Diffusion or Midjourney, produce several variations of an image for a given query which can eventually be curated by the users.
    In this paper, we theoretically study the impact of data curation on iterated retraining of generative models and show that it can be seen as an \emph{implicit preference optimization mechanism}. However, unlike standard preference optimization, the generative model does not have access to the reward function or negative samples needed for pairwise comparisons. Moreover, our study doesn't require access to the density function, only to samples. We prove that, if the data is curated according to a reward model, then the expected reward of the iterative retraining procedure is maximized. We further provide theoretical results on the stability of the retraining loop when using a positive fraction of real data at each step. Finally, we conduct illustrative experiments on both synthetic datasets and on CIFAR10 showing that such a procedure amplifies biases of the reward model.
\end{abstract}

\section{Introduction}
Today state-of-the-art generative models can produce multi-modal generations virtually indistinguishable from human-created content, like text~\citep{GPT4}, images~\citep{StablediffXL}, audio~\citep{Borsos2023}, or videos~\citep{Villegas2022, videoworldsimulators2024}.
The democratization of these powerful models by open-sourcing their weights \citep{StablediffXL, jiang2023mistral, touvron2023llama}
or allowing public inference
\citep{ramesh2021zero, MidJourney, GPT4} paves the way to creating synthetic content at an unprecedented scale---the results inevitably populate the Web. In particular, existing datasets are already composed of synthetic data such as JourneyDB~\citep{journeydb} and SAC~\citep{sacdataset}. Moreover,
\citet[Fig. 2]{alemohammad2023self} showed LAION-$5$B \citep{LAION5B}, a large-scale Web-crawled dataset used to train future generative models, \emph{already} contains synthetically generated images.

%
\begin{figure}[tb]
    \centering
    \vspace{-1cm}
    \includegraphics[width=\linewidth]{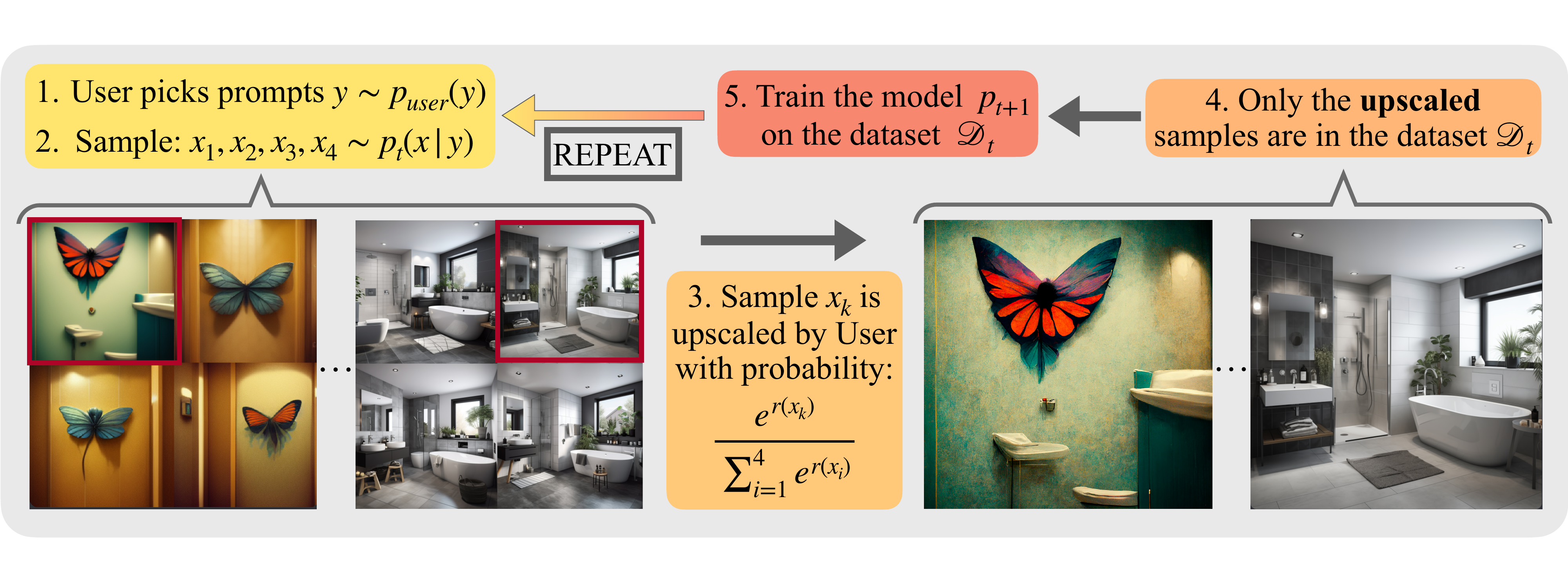}
    \vspace{-5mm}
    \caption{
    \small Illustration of the curation phenomenon:
    1. User proposes prompts such as ``butterfly going to the bathroom'', 2.
    Four images are generated with Midjourney, 3. User only \emph{upscale} one (e.g. the top left image) image, 4. Solely upscaled images are incorporated into the JourneyDB dataset~\citep{journeydb}.
    Samples from other diffusion models can be found in \Cref{fig_midjourney,fig_stable_diff}.
    }
    \label{fig:curation}
\end{figure}
%
\looseness=-1
There is strong evidence that the synthetic data on the web has been, to a large extent, curated by human users. For instance, the \href{https://laion.ai/blog/laion-aesthetics/}{LAION-Aesthetics datasets} contains synthetically generated images that have been curated using a reward model learned from human feedback on the \href{https://github.com/JD-P/simulacra-aesthetic-captions}{Simulacra Aesthetic Captions} dataset \citep{sacdataset}.  Additionally, the \href{https://journeydb.github.io/}{JourneyDB} dataset, contains human-picked images from the Midjourney (\citealt{MidJourney}) discord server, that have been \emph{upscaled}, \ie images that have been requested in a higher resolution (see~\Cref{fig:curation}). More generally, the user interface of the most popular state-of-the-art text-to-image models (\eg Midjourney and Stable Diffusion 2.1 Huggingface implementation) proposes four alternatives for a single prompt for the user to pick their favorite.

\looseness=-1
While the consequences of iterative retraining of generative models on synthetically generated data have raised a lot of attention in the community~\citep{alemohammad2023self,shumailov2023curse,bertrand2023stability,dohmatob2024model},
previous works do not consider that generated data could be curated. This subtle nuance may be of major importance since in numerous applications, augmenting the datasets with curated synthetically generated data is found to enhance the performance of the downstream task \citep{Azizi2023synthetic,wang2023better,Gillman2024} and even generative models themselves \citep{Hemmat2023, Gulcehre2024}, hinting that quality might not be the biggest problem. On the other hand, recent works \citet{Wyllie2024, chen2024would} showed that this might lead to new issues, such as bias amplification.

This is why, in this work, we aim to theoretically understand how the process of curation of synthetic data is connected with the reward model underlying human preferences
and what distribution is learned by generative models trained on such curated synthetic data.

\xhdr{Main contributions} We summarize our core contributions as the following:
\begin{itemize}[noitemsep,topsep=0pt,parsep=0pt,partopsep=0pt,label={\large\textbullet},leftmargin=*]
    \item \looseness=-1 We first focus on the self-consuming loop on (only) curated synthetic samples: we
    show that the expected reward gets maximized in \Cref{lem:reward_increasing_better} and that its variance vanishes. We further provide a convergence result in \Cref{lem:klconvergence}.
    \item \looseness=-1 We additionally study the iterative retraining loop when real data is re-injected at each step: we first improve previous results of stability using raw synthetic samples by \cite{bertrand2023stability} and show convergence in Kullback-Leibler (KL) divergence to the optimal distribution \Cref{lem:resultindistribution}. When using curated synthetic samples, we show that the KL divergence with the optimal distribution remains bounded \Cref{lem:kl_upper_bound_mixture}, as well as an improvement on the expected reward \Cref{lem:reward_increases_mixture}, enlightening connections with Reinforcement Learning from Human Feedback (RLHF).
    \item \looseness=-1 We finally illustrate our theoretical results on synthetic datasets (mixtures of Gaussians and two moons) as well as natural images on CIFAR10 in \Cref{sec:experiments}. We highlight how curation based on the confidence of a classifier can lead to the emergence of biases \citep{Wyllie2024}.
\end{itemize}

\section{Iterative retraining with curated synthetic data}
\label{sec:theoretical_results}
\looseness=-1
We now study the fully synthetic self-consuming loop with curated samples. Unlike previous works that do not take curation into account~\citep{alemohammad2023self,shumailov2023curse} and focused on stability of the process \citep{bertrand2023stability}, we show that retraining with curated samples both maximizes an underlying reward whose variance collapses, and converges to maximum reward regions. In \Cref{sec:newdistrib} we first specify explicitly the distribution induced by a discrete choice model and highlight connections with RLHF. We additionally show that the expected reward increases and that its variance vanishes \Cref{lem:reward_increasing_better}. Finally, inspired by stability results of \citet{bertrand2023stability}, in \Cref{sec:mixrealsynth} we extend our study to settings when real data is injected. More precisely, we improve previous results of stability of~\citet{bertrand2023stability} without curation and provide novel theoretical bounds when the synthetic data is curated.

\looseness=-1
\xhdr{Notation and conventions}
Lowercase letters $p$ denote densities while uppercase letters $\sP$ indicate their associated probabilities.
We denote $\pdata \in \gP(\sR^d)$ the real data distribution and for $t\in \sN$, we denote $p_t \in \gP(\sR^d)$ the model distribution at step $t$ of the iterative retraining loop.
Analogously, $\theta_t$ is the corresponding parameters of the learned parametric model $p_t$.
We write $p_0$ to indicate the initial model learned using maximum likelihood on $\pdata$, this includes many modern generative model families such as diffusion models~\citep{ho2020denoising,Song2020} and flow-matching methods~\citep{lipman2022flow,tong2023conditional}.

\xhdr{Discrete $K$-choice model}
We propose using a discrete choice model for the curation phenomenon illustrated in \Cref{fig:curation}, where users pick their preferred image that will be upscaled in the next dataset.
Modeling human preferences is usually done via the Luce choice rule model \citep{shepard1957stimulus, luce1963handbook, dumoulin2023density}, where the human is modeled as a rational Bayesian subject. The probability that $x_1$ is preferred to $x_2$ is formulated using an underlying reward function $r(x)$ and Bradley-Terry model, (\citealt{BradleyTerry1952}).
Under Luce's choice axiom~\citep{luce2005individual}, it is possible to generalize this formula to $K\geq 1$ samples as in \Cref{eq:BT}.
For given samples $x_1, \dots, x_K $ drawn from $p_t$, the random curated sample denoted $\hat x $ is chosen according to this generalized Bradley-Terry model $\hat x \sim \BT(x_1, \dots, x_K)$ as in \Cref{eq:BT} \citep{BradleyTerry1952}.
In particular, the curation procedure can be summarized as follows
   \begin{empheq}[box=\mymath]{align}
    &\notag \\
    &\text{1) Sample }x_1 \sim p_t, \dots, x_K \sim p_t\,,\, \text{independently,}
   \label{eq:sampling}
    \\
    & \text{2) Pick }\hat x \sim \BT(x_1, \dots, x_K) \, ,
    \,
    \text{\ie}
    \,
   \mathbb{P}(\hat x =x_k |x_1,\dots, x_K )
   =
   \frac{e^{r(x_k)}}{\sum_{j=1}^K e^{r(x_j)}}
   \, , \, 1\leq k\leq K.
   \label{eq:BT}
\end{empheq}

\looseness=-1
\xhdr{Self-consuming loop}
After generating and curating a synthetic dataset according to \Cref{eq:sampling,eq:BT}, the next generation of generative models is trained either solely on the distribution of curated samples ($\lambda \to \infty$), or on a mixture of reference samples (that either comes from real data $\pdata$ or a reference generative model $p_0$) and synthetic curated samples ($\lambda < \infty$) depending on the studied setting
\begin{empheq}[box=\mymath]{align}
    p_{t+1}
    =
    \argmax_{p\in \mathcal{P}} \frac{1}{1+\lambda} \cdot \mathlarger{\E}_{x \sim p_{\text{\tiny ref}}}\Big[\log p(x)\Big]
    +
    \frac{\lambda}{1+\lambda}
    \cdot
    \mathlarger{\mathbb{E}}_{
        \substack{
            {x}_1 , \dots, {x}_K \sim p_{t}
            \\
            \hat x \sim \BT({x}_1, \dots, {x}_K)
            }
        }
    \Big[\log p(\hat{x})\Big]
    \enspace.
    \label{eq_pb_statement}
\end{empheq}
where $\mathcal{P}$ is the set of achievable distributions with our model.
This work aims to study the retraining dynamics of the distribution
defined in \Cref{eq_pb_statement}.
First, in \Cref{sec:newdistrib} we study the simplified dynamics of \Cref{eq_pb_statement} in the regime $\lambda \to \infty$, \ie when solely retraining on curated synthetic data and show convergence of the process but variance collapse.
In \Cref{sec:mixrealsynth} we study the exact dynamics given in \Cref{eq_pb_statement} and the impact on the stability of retraining on a mix of real data synthetic curated data.
\subsection{Iterative retraining only on the curated synthetic samples}
\label{sec:newdistrib}
\looseness=-1
In this section, we study the dynamics of the density learned through iterative discrete $K$-choice curation in the fully-synthetic setting (\ie $\lambda\rightarrow \infty$): \Cref{eq_pb_statement} boils down to
\begin{equation}
    p_{t+1}
    =
    \argmax_{p \in \mathcal{P}} \mathlarger{\mathbb{E}}_{
        \substack{
            {x}_1 , \dots, {x}_K \sim p_{t}
            \\
            \hat x \sim \BT({x}_1, \dots, {x}_K)
            }
        }
    \Big[\log p(\hat{x})\Big]
    \enspace.
    \label{eq_pb_statment_infinite_lambda}
\end{equation}
As a warm-up, we first consider the limit of $K\rightarrow \infty$ in \Cref{lem:pairwisedistrib} and draw explicit connections with RLHF.
This simplification yields a closed-form formula form for the solution of \Cref{eq_pb_statment_infinite_lambda} and provides intuitions for the dynamics of learning on curated samples.
\begin{mdframed}[style=MyFrame2]
\begin{restatable}{lem}{pairwisedistrib}
Let $p_{t+1}$ be defined as in~\Cref{eq_pb_statment_infinite_lambda}.
If $\mathcal{P}= \mathcal{P}(\mathbb{R}^d)$ is the set of probability distributions on $\mathbb{R}^d$, and if we assume that $\E_{y \sim p_t}\left[e^{r(y)}\right]<\infty$, then we have for all $x \in \sR^d$,
\begin{equation}
    p_{t+1}(x) \xrightarrow{K\rightarrow \infty} p_t(x)\frac{e^{r(x)}}{\E_{\tilde{x} \sim p_t}\left[e^{r(\tilde{x})}\right]}
    \enspace.
    \label{eq:limitrlhf}
\end{equation}
\label{lem:pairwisedistrib}
\end{restatable}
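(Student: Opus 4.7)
The proof reduces to three moves: identify $p_{t+1}$ as a marginal density, write that density in closed form, and pass to the limit $K \to \infty$.

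\textbf{Step 1 (MLE reduction to a marginal density).} Let $q_t^K$ be the marginal density of $\hat x$ under the joint sampling scheme $x_1,\dots,x_K \sim p_t$ i.i.d.\ and $\hat x \sim \BT(x_1,\dots,x_K)$. Then the objective in \Cref{eq_pb_statment_infinite_lambda} equals $\E_{\hat x \sim q_t^K}[\log p(\hat x)] = -\KL(q_t^K \| p) + H(q_t^K)$. Because $\mathcal{P} = \mathcal{P}(\mathbb{R}^d)$ contains all densities, the unique maximizer is $p_{t+1} = q_t^K$, so it suffices to compute $q_t^K$ and take its limit.

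\textbf{Step 2 (Closed form of $q_t^K$).} Conditioning on which index $k$ is selected and using symmetry of the $K$ i.i.d.\ draws, one gets $q_t^K(x)\,dx = K\,\mathbb{P}(\hat x = x_1,\, x_1 \in dx)$, which unfolds to
\begin{equation*}
q_t^K(x) \;=\; K\, p_t(x)\, e^{r(x)} \cdot \E_{x_2,\dots,x_K \sim p_t}\!\left[\frac{1}{e^{r(x)} + S_{K-1}}\right], \qquad S_{K-1} := \sum_{j=2}^K e^{r(x_j)}.
\end{equation*}
A short sanity check, by symmetrizing again in $x$, gives $\int q_t^K = 1$ as it should.

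\textbf{Step 3 (Limit as $K \to \infty$).} Set $\mu := \E_{y\sim p_t}[e^{r(y)}] \in (0,\infty)$, finite by hypothesis. Since the $e^{r(x_j)}$ are i.i.d.\ with mean $\mu$, the strong law of large numbers gives $S_{K-1}/(K-1) \to \mu$ almost surely, so
\begin{equation*}
\frac{K}{e^{r(x)} + S_{K-1}} \;\xrightarrow[K\to\infty]{\text{a.s.}}\; \frac{1}{\mu},
\end{equation*}
and interchanging the limit with the outer expectation delivers $q_t^K(x)\to p_t(x)\,e^{r(x)}/\mu$, which is the formula in \Cref{eq:limitrlhf}.

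\textbf{Main obstacle.} The one subtlety is justifying the limit exchange in Step 3, since $K/(e^{r(x)}+S_{K-1})$ has no pointwise bound uniform in $K$ (only the trivial bound $K/e^{r(x)}$). I would handle this by splitting on the concentration event $A_K^\epsilon := \{S_{K-1} \geq (K-1)(\mu - \epsilon)\}$: on $A_K^\epsilon$ the integrand is bounded by $2/(\mu - \epsilon)$ so dominated convergence applies directly, while on the complement the bound $K/e^{r(x)}$ combined with $\mathbb{P}((A_K^\epsilon)^c) \to 0$ controls the residual (Chernoff gives $o(1/K)$ decay under a mild exponential-moment assumption on $e^{r(y)}$; otherwise a Laplace-transform representation $1/(a+S_{K-1}) = \int_0^\infty e^{-t(a+S_{K-1})}dt$ together with the local expansion $\E[e^{-s e^{r(y)}}] = 1 - \mu s + o(s)$ and the rescaling $u = Kt$ gives the required dominating function and the limit $\int_0^\infty e^{-\mu u}du = 1/\mu$).
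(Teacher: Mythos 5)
Your proof is correct and follows essentially the same route as the paper's: reduce the maximum-likelihood step to identifying $p_{t+1}$ with the marginal law of $\hat x$, symmetrize over the $K$ i.i.d.\ draws to get $p_{t+1}(x)=p_t(x)\,K e^{r(x)}\,\E\bigl[1/(e^{r(x)}+S_{K-1})\bigr]$, and pass to the limit via the law of large numbers. The only difference is that you explicitly flag and repair the limit--expectation interchange (which the paper glosses over with just the bounds $0\le e^{r(x)}/(e^{r(x)}+S_{K-1})\le 1$); your truncation on the concentration event works, and in fact no extra exponential-moment assumption is needed there, since the lower tail $\mathbb{P}(S_{K-1}\le (K-1)(\mu-\epsilon))$ of a sum of i.i.d.\ nonnegative variables always decays exponentially (the Chernoff bound only uses the Laplace transform at negative arguments, which is finite).
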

\vspace{-3mm}
\end{mdframed}
\xhdr{Dependency on $K$ and connection to RLHF.}
    The proof of \Cref{lem:pairwisedistrib} relies on the fact that we can obtain a closed-form formula for the density $p_{t+1}$ induced from discrete $K$-choice curation on $p_t$ (\Cref{eq_pb_statment_infinite_lambda}).
    This is done in \Cref{sec:proof_pairwisedistrib} where we show that its density can be written
\begin{align}
    p_{t+1}(x) &= p_t(x) \cdot H^K_{p_t}(x) \, , \, \text{with} \,
    H^K_{p_t}(x):= \mathbb{E}_{x_1,\ldots,x_{K-1} \sim p_t} \left[\frac{K \cdot e^{r(x)}}{e^{r(x)} + \sum_{i=1}^{K-1}e^{r(x_i)}} \right]
    \,.
    \label{eq_hpt}
\end{align}
The latter directly implies that for all $ K \geq 1,\  H_{p_t}^K(x)\in (0, K)$. In particular, small values of $K$ act as a \emph{regularization} which prevents the density from blowing up too much in high rewards areas.
On the other hand, the higher the number of samples used for curation, the more it can affect the induced distribution.
In the limit $K\rightarrow \infty$, \Cref{lem:pairwisedistrib} shows an interesting connection between iterative retraining on curated data and reward maximization via RLHF.
Given a supervised-finetuned model distribution $\pi^{\sft}$ and a regularization parameter $\beta$, the goal of RLHF is to find a policy that maximizes a reward $r(x)$ fitted on human preferences :
\begin{align*}
    &\pi^{\rlhf}
    = \argmax_{\pi} \E_{x \sim \pi}\left[r(x)\right] - \beta \KL\left(\pi||\pi^{\sft}\right) \, ,
   \text{ which has a closed form formula}\,,
    \\
    &\pi^{\rlhf}(x)
    \propto \pi^{\sft}(x)e^{\nicefrac{r(x)}{\beta}} \quad \text{\citep{go2023aligning,rafailov2024direct}}.
\end{align*}
Therefore, in the limit $K\rightarrow \infty$, \Cref{eq:limitrlhf} shows that performing iterative retraining with human curation for $t$ iterations is equivalent to performing RLHF with hyperparameter $\beta = \tfrac{1}{t}$ from the initial distribution $\pi^{\sft}:=p_0$.
The corresponding regularization parameter $\beta$ is inversely proportional to the number of retraining steps.
This connection is surprising since performing maximum likelihood on a curated distribution (\Cref{eq_pb_statement}) is a priori different than directly maximizing a reward with Kullback-Leibler (KL) regularization.

\looseness=-1
To prove that curation both increases the expected reward and reduces the variance,
we need an additional assumption to ensure that it is almost surely bounded at initialization:
\begin{mdframed}[style=MyFrame2]
\begin{restatable}{ass}{rewardbounded}
There exists $r_* \in \sR$ such that: (a) $p_0$-almost surely, $r(x)\leq r_*$ and (b) $p_0$ puts positive mass in a neighborhood of $r_*$ i.e., $\forall \epsilon > 0, \sP_0(r(x)\geq r_* - \epsilon) > 0$.
\label{ass:rewardbounded}
\end{restatable}
\end{mdframed}
In particular, \Cref{ass:rewardbounded} is satisfied if we suppose that the reward bounded, which is reasonable if we suppose it is continuous given that the set of images $[0,1]^d$ is compact. Finally note that assuming (a), we can always choose $r_*$ such that (b) is satisfied by picking the smallest value that almost surely bounds the reward at initialization. On the other hand (b) imposes that $r_*$ is the smallest value that a.s. upper-bounds the reward. This shows that $r_*$ is uniquely defined which is an important point as we will show convergence of $p_t$ towards the level set $r(x)=r_*$ in \Cref{lem:reward_increasing_better} and \Cref{lem:klconvergence}.

\looseness=-1
\Cref{lem:reward_increasing_better} states the reward expectation increases proportionally to the reward variance.
\begin{mdframed}[style=MyFrame2]
\begin{restatable}{lem}{expectationincreasingbetter}
Let $p_{t+1}$ be the distribution induced from a discrete choice model on $p_t$ (\Cref{eq_pb_statment_infinite_lambda}).
Suppose \Cref{ass:rewardbounded} holds, then the expected reward increases proportionally to its variance at each retraining iteration:
\begin{equation}
    \E_{p_{t+1}}\left[e^{r(x)}\right]
    \geq
    \E_{p_{t}}\left[e^{r(x)}\right] + \frac{K-1}{K}\frac{\Var_{p_t}\left[e^{r(x)}\right]}{e^{r_*}}
    \enspace.
    \label{eq:reward_increasing_better}
\end{equation}
Especially the expected reward converges to the maximum reward and its variance vanishes:
\begin{equation*}
    \E_{p_t}\left[e^{r(x)}\right] \xrightarrow{t \rightarrow \infty}e^{r_*} \quad \text{and}\quad \Var_{p_t}\left[e^{r(x)}\right]\xrightarrow{t\rightarrow \infty}0
    \enspace.
\end{equation*}
\label{lem:reward_increasing_better}
\end{restatable}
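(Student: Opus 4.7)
The plan is to rewrite both sides of \Cref{eq:reward_increasing_better} as expectations over $K$ i.i.d.\ draws from $p_t$ using the explicit density formula \Cref{eq_hpt}, and then exploit the almost-sure bound $r \leq r_*$ from \Cref{ass:rewardbounded}(a) to turn the resulting \emph{empirical} reward variance into the \emph{true} variance $\Var_{p_t}[e^{r(x)}]$.

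For the main inequality, I would set $u(x) := e^{r(x)}$ and let $x_1, \dots, x_K$ be i.i.d.\ from $p_t$. Plugging \Cref{eq_hpt} into $\E_{p_{t+1}}[u] = \int u(x)\, p_t(x)\, H_{p_t}^K(x)\, dx$ and symmetrizing over the role of the $K$ samples gives
\[
\E_{p_{t+1}}[u]
= \E\!\left[\frac{\sum_{i=1}^K u(x_i)^2}{\sum_{j=1}^K u(x_j)}\right]
= \E\!\left[\frac{\overline{u^2}}{\overline{u}}\right],
\]
with $\overline{u} := \tfrac{1}{K}\sum_i u(x_i)$ and $\overline{u^2} := \tfrac{1}{K}\sum_i u(x_i)^2$. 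Since $\E_{p_t}[u] = \E[\overline{u}]$ by symmetry, subtracting yields $\E_{p_{t+1}}[u] - \E_{p_t}[u] = \E[(\overline{u^2} - \overline{u}^2)/\overline{u}]$. The relation $p_{t+1} \propto p_t \cdot H_{p_t}^K$ (with $H_{p_t}^K>0$) propagates $u \leq e^{r_*}$ a.s.\ from $p_0$ to every $p_t$, hence $\overline{u} \leq e^{r_*}$ a.s.; lower-bounding the denominator and computing $\E[\overline{u^2} - \overline{u}^2] = \tfrac{K-1}{K}\Var_{p_t}[u]$ by a routine i.i.d.\ expansion then closes \Cref{eq:reward_increasing_better}.

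For the asymptotic statement, the monotone bounded sequence $\E_{p_t}[u]$ converges to some $L \leq e^{r_*}$, and telescoping \Cref{eq:reward_increasing_better} gives $\sum_t \Var_{p_t}[u] < \infty$, so $\Var_{p_t}[u] \to 0$. The main obstacle is ruling out the case $L < e^{r_*}$, since the Lyapunov bound alone does not pin down the limit. I would proceed by contradiction: assume $L < e^{r_*}$, pick $\delta > 0$ with $L < e^{r_*} - \delta$, and set $B := \{u \geq e^{r_*} - \delta\}$; then $\sP_0(B) > 0$ by \Cref{ass:rewardbounded}(b). Jensen's inequality applied to the convex map $(y_1, \dots, y_{K-1}) \mapsto u(x)/(u(x) + \sum_i y_i)$, combined with the monotonicity bound $\E_{p_t}[u] \leq L$, gives for every $x \in B$ and every $t$
\[
H_{p_t}^K(x)
\geq \frac{K\, u(x)}{u(x) + (K-1)\E_{p_t}[u]}
\geq \frac{K(e^{r_*} - \delta)}{(e^{r_*} - \delta) + (K-1)L}
\geq 1 + \eta
\]
for some $\eta > 0$, which is achievable when $K \geq 2$ and $\delta$ is small enough since $L < e^{r_*}$ strictly. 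Integrating $p_{t+1} = p_t \cdot H_{p_t}^K$ over $B$ yields the geometric growth $\sP_{t+1}(B) \geq (1+\eta)\, \sP_t(B)$, so $\sP_t(B) \geq (1+\eta)^t \sP_0(B) \to \infty$, contradicting $\sP_t(B) \leq 1$. Hence $L = e^{r_*}$, completing the convergence part.
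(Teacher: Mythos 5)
Your proof of the main inequality \eqref{eq:reward_increasing_better} is correct and is essentially the paper's argument: after symmetrizing, the identity $\overline{u^2}/\overline{u} = \overline{u} + (\overline{u^2}-\overline{u}^2)/\overline{u}$ is exactly the paper's decomposition of $K\sum_i u_i^2/\sum_j u_j$ into $\sum_j u_j$ plus $\sum_{i<j}(u_i-u_j)^2/\sum_k u_k$, followed by the same lower bound $\overline{u}\leq e^{r_*}$ on the denominator and the same i.i.d.\ computation $\E[\overline{u^2}-\overline{u}^2]=\tfrac{K-1}{K}\Var_{p_t}[u]$. Where you genuinely diverge is in ruling out a limit $L<e^{r_*}$. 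The paper first establishes a separate monotonicity lemma (\Cref{lem:probabilityincreases}: $\sP_t(r\geq r_*-\epsilon)$ is non-decreasing in $t$, proved by a rearrangement argument on the Bradley--Terry weights), uses it together with \Cref{ass:rewardbounded}(b) to keep the variance bounded below by $\epsilon^2\delta$ as long as the mean stays below $e^{r_*}-2\epsilon$, and concludes via \eqref{eq:reward_increasing_better} that the mean must increase by a fixed positive amount per step, which can only happen finitely often. You instead apply Jensen to get the pointwise bound $H^K_{p_t}(x)\geq Ku(x)/(u(x)+(K-1)\E_{p_t}[u])$ (the same bound the paper uses in its auxiliary \Cref{lem:reward_increasing}), observe that under the contradiction hypothesis this is uniformly $\geq 1+\eta$ on the super-level set $B$, and derive geometric growth $\sP_t(B)\geq(1+\eta)^t\sP_0(B)$, contradicting $\sP_t(B)\leq 1$. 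Both routes are valid and both silently require $K\geq 2$ (for $K=1$ the dynamics are stationary and the convergence claim is vacuous). Your route is more self-contained for this particular lemma, since it bypasses the super-level-set monotonicity lemma entirely; the paper's route has the advantage that \Cref{lem:probabilityincreases} is reused later in the proof of \Cref{lem:klconvergence}, and your growth argument does not directly yield that reusable fact. One cosmetic point: the convexity you invoke is convexity of $s\mapsto Ku(x)/(u(x)+s)$ in the scalar sum $s=\sum_i e^{r(y_i)}$ (or of the affine-composed map on the positive orthant), not of a map on $(\sR^d)^{K-1}$; stating it that way would make the Jensen step airtight.
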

\vspace{-3mm}
\end{mdframed}
\xhdr{Discussion} \Cref{lem:reward_increasing_better} shows that the reward augmentation is directly proportional to the reward variance at each retraining step. In other words, the more heterogeneous the reward is, the more its expectation increases at the next step. \Cref{lem:reward_increasing_better} further shows that the expected reward converges towards the reward maximizers. We can additionally deduce that the variance is doomed to vanish. This is detailed in \Cref{sec:proof_reward_increasing_better} which additionally states that the reward variance decreases fast enough to have finite sum.
Finally, we note that \Cref{lem:reward_increasing_better} helps us understand the fixed points of this process:  due to the variance term in \Cref{eq:reward_increasing_better}, a fixed point of the retraining loop must put mass on a single level set of the reward function. The reciprocal is obviously true as detailed in the appendix (\Cref{lem:fixed_point}).

We can finally show a stronger result of convergence for the Kullback-Leibler divergence.  We will need to assume that at initialization, the probability density puts a positive mass on the level set $r(x) = r_*$. This corresponds to additionally assuming that $\epsilon \geq 0$ instead of only $\epsilon > 0$ in \Cref{ass:rewardbounded} b). Without this assumption, the probability density support would consecutively vanish towards the maximizer of the reward preventing KL convergence. We therefore assume $\sP_0(r(x)=r_*) > 0$. Further denote $p_*$ the probability density at initialization restricted to the domain that maximizes the reward and renormalized: $p_*(x):=\tfrac{p_0(x)\mathds{1}_{r(x)=r_*}}{\sP_0(r(x)=r_*)}$.
\begin{mdframed}[style=MyFrame2]
\begin{restatable}{thm}{klconvergence}
The self-consuming loop on curated samples $p_t$ converges to $p_*$:
\begin{equation*}
   \KL(p_*||p_t) \xrightarrow{t\rightarrow \infty} 0
   \enspace.
\end{equation*}
\label{lem:klconvergence}
\end{restatable}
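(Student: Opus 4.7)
The plan is to reduce $\KL(p_*||p_t)$ to a simple expression in $M_t := \sP_t(r(x)=r_*)$ and then show $M_t \to 1$ via a dominated-convergence argument.

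First, I would observe that $H^K_{p_t}(x)$ depends on $x$ only through $r(x)$, so it is constant on the level set $\{r=r_*\}$; denote this common value $h_t$. By the recursion in \Cref{eq_hpt}, for every $x$ with $r(x) = r_*$ we get $p_t(x) = Q_t\, p_0(x)$ with $Q_t := \prod_{s<t} h_s$. Integrating over $\{r = r_*\}$ yields $M_t = Q_t M_0$. Since $p_*(x) = p_0(x)\mathds{1}_{r(x)=r_*}/M_0$ is supported on $\{r=r_*\}$, a direct computation of $\KL(p_*||p_t) = \int p_* \log(p_*/p_t)\,dx$ collapses to $\KL(p_*||p_t) = -\log M_t$, reducing the problem to showing $M_t \to 1$.

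The key enabler is that $H^K_{p_s}(x)$ is strictly increasing in $r(x)$: writing $u = e^{r(x)}$, the integrand $u/(u+S)$ has $u$-derivative $S/(u+S)^2 > 0$, so the expectation defining $H^K_{p_s}$ is strictly increasing in $u$. Since $r \le r_*$ holds $p_0$-almost surely by \Cref{ass:rewardbounded}(a), this yields the uniform bound $q_t(x) := p_t(x)/p_0(x) = \prod_{s<t} H^K_{p_s}(x) \le Q_t \le 1/M_0$ on the support of $p_0$. This dominator is the crucial ingredient.

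Next I would establish pointwise decay of $q_t$ on $\{r<r_*\}$. By \Cref{lem:reward_increasing_better}, $\E_{p_s}[e^{r(x)}]\to e^{r_*}$ and $\Var_{p_s}[e^{r(x)}]\to 0$, so Chebyshev's inequality gives that $S_s := \sum_{i=1}^{K-1}e^{r(y_i)}$ with $y_i\sim p_s$ concentrates in probability at $(K-1)e^{r_*}$. Since $u/(u+S)$ is bounded and continuous in $S$, the bounded convergence theorem yields $H^K_{p_s}(x) \to K u/(u+(K-1)e^{r_*})$, which is strictly below $1$ whenever $u = e^{r(x)} < e^{r_*}$. Hence $\log H^K_{p_s}(x)$ is eventually bounded above by a strictly negative constant, so $\log q_t(x) = \sum_{s<t}\log H^K_{p_s}(x) \to -\infty$, i.e., $q_t(x) \to 0$ for every $x$ with $r(x) < r_*$.

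Combining the uniform bound $q_t \le 1/M_0$, the pointwise limit $q_t \to 0$ on $\{r<r_*\}$, and the integrable dominator $p_0/M_0$, dominated convergence gives $N_t := \int_{r<r_*} p_0(x)q_t(x)\,dx \to 0$. Since $M_t+N_t = 1$, we conclude $M_t\to 1$ and therefore $\KL(p_*||p_t) = -\log M_t \to 0$. The main obstacle is securing the uniform dominator of the second paragraph: without exploiting monotonicity of $H^K_{p_s}$ in $r(x)$ to cap $q_t$ by $Q_t\le 1/M_0$, dominated convergence could not be applied and pointwise decay alone would not yield the $L^1$ statement needed to conclude $M_t \to 1$.
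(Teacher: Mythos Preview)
Your argument is correct. Both you and the paper begin with the same reduction $\KL(p_*\|p_t)=-\log M_t$ via the observation that $H^K_{p_t}$ depends only on $r(x)$, and both ultimately rely on \Cref{lem:reward_increasing_better} for the convergence of the moments of $e^{r(x)}$. The routes to $M_t\to 1$ are genuinely different, though. The paper proves two increment lemmas --- first that $\sP_t(r\ge r_*-\epsilon)$ is nondecreasing in $t$, then that the increment of $\sP_t(r=r_*)$ dominates $\sP_0(r=r_*)$ times the increment of $\sP_t(r\ge r_*-\epsilon)$ --- and combines them with a Markov-type argument to force $M_t$ up past any level $1-\delta$. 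Your approach instead leverages monotonicity of $H^K_{p_s}$ in $r$ to obtain the uniform dominator $p_t/p_0\le 1/M_0$, uses the moment convergence from \Cref{lem:reward_increasing_better} to show $H^K_{p_s}(x)\to Ke^{r(x)}/(e^{r(x)}+(K-1)e^{r_*})<1$ pointwise on $\{r<r_*\}$, and finishes with dominated convergence. Your route is shorter and packages the analysis into a single DCT step; the paper's route isolates the monotonicity statement $\sP_{t+1}(r\ge r_*-\epsilon)\ge \sP_t(r\ge r_*-\epsilon)$ as a lemma of independent interest, at the cost of a more intricate $\epsilon$--$\delta$ endgame.
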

\vspace{-5mm}
\end{mdframed}
%
\Cref{lem:klconvergence} shows that the process of retraining with curation \Cref{eq:BT} eventually converges to \emph{the highest level set of the reward reached at initialization}.
In particular, in the limit of a large number of retraining steps, the probability of all smaller rewards vanishes.
This can have strong implications when retraining the next generation of generative models on a curated Web-scaled dataset: the learned distribution will lose diversity and collapse to the highest reward samples.
%
\subsection{Stability of iterative retraining on a mixture of real and synthetic data}
\label{sec:mixrealsynth}
After showing convergence but variance collapse of the self-consuming loop on curated synthetic samples, we now study the impact on the stability of injecting real data at each step. This setting is motivated by the recent work of \citet{bertrand2023stability} that showed stability of the iterative retraining loop with real and synthetic data around a local maximizer $\theta_*$ of the training distribution likelihood. This setting is furthermore relevant since Web-scrolled datasets will presumably keep containing a mixture of real data and human-curated synthetic data. In \Cref{subsec:mixture_real_synth} we first improve previous results on retraining on mixed datasets which underlines the beneficial impact of real data on stability and in \Cref{subsec:mix_real_filtered}, we prove both stability and reward augmentation in the setting of mixed real and \emph{curated} synthetic data.

\subsubsection{Iterative retraining without curation}
\label{subsec:mixture_real_synth}
To motivate the impact of real data on the stability of the retraining loop with curation, we focus first on its impact without curation and improve previous results in that setting in \Cref{lem:resultindistribution}.

\xhdr{Setting} In this section only, following the approach of \citet{bertrand2023stability}, we will not assume infinite capacity for our distribution (\ie $\mathcal{P} \neq \mathcal{P}(\mathbb{R}^d$) and hence adopt a parametric approach $\mathcal{P}= \mathcal{P}_{\Theta} := \{p_\theta \;|\; \theta \in \Theta\}$. Given the 
current generative model distribution $p_{\theta_t}$, $p_{\theta_{t+1}}$ must at the next iteration maximize the combined log-likelihood of real and generated data with hyperparameter $\lambda$, \ie \Cref{eq_pb_statement} becomes:
\begin{align*}
    p_{\theta_{t+1}} = \underset{p_\theta \in \mathcal{P}_{\Theta}}{\argmax}\, \frac{1}{1+\lambda} \cdot
    \E_{\pdata}[\log p_\theta(x)] + \frac{\lambda}{1+\lambda} \cdot\E_{p_{\theta_t}}[\log p_\theta(x)]
    \enspace.
\end{align*}
We finally denote $p_{\theta_*}=\underset{p_\theta\in \mathcal{P}_{\Theta}}{\argmax}\ \E_{\pdata}[\log p_\theta(x)]$
a maximizer of the data distribution log-likelihood.
We also make the following assumption taken from \citet{bertrand2023stability}:
\begin{mdframed}[style=MyFrame2]
\begin{restatable}{ass}{assumptionshessian}
For $\theta$ close enough to $\theta_*$, the mapping $x \mapsto \nabla^2_\theta\log p_\theta(x)$ is $L$-Lipschitz and the mapping $\theta \mapsto \E_{\pdata}\left[\log p_\theta(x) \right]$ is continuously twice differentiable with $\E_{\pdata}\left[\nabla^2_\theta\log p_\theta(x)\right] \preceq -\alpha I \prec 0$.
\label{ass:assumptionshessian}
\end{restatable}
\end{mdframed}
\citet{bertrand2023stability} proved stability of the retraining loop provided $\lambda$ is sufficiently small. However, their proof is restricted to $\lambda < \tfrac{1}{2}$, preventing the use of a fraction of synthetic data $\tfrac{\lambda}{1+\lambda}$ bigger than one-third which they left as future work. In \Cref{lem:resultindistribution}, we extend their proof to any fraction of synthetic data provided the best model distribution is sufficiently close to $\pdata$ in Wasserstein distance \citep{villani2009optimal} \ie $\gW_1(p_{\theta_*}, \pdata)\leq \epsilon < \tfrac{\alpha}{L}$. Additionally, we express the result in distribution, while they expressed it in parameter space.




\begin{mdframed}[style=MyFrame2]
\begin{restatable}{thm}{resultindistribution} If
$L \epsilon < \alpha$ and $\lambda<\frac{\alpha}{2L\epsilon}$, then there exists a neighborhood of the optimal distribution parameters $\theta_*$ such that for any initial parameters $\theta_0$ in that neighborhood, $p_{\theta_t}$ converges to $p_{\theta_*}$ exponentially fast:
\begin{equation*}
    \KL(p_{\theta_*}||p_{\theta_t}) = \Tilde{\gO}\left(\left(\frac{\lambda(\alpha+\epsilon L)}{\alpha+\lambda(\alpha-\epsilon L)}\right)^{2t}\right)\enspace.
\end{equation*}
%
\label{lem:resultindistribution}
\end{restatable}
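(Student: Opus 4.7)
The plan is to follow the implicit–function–theorem contraction scheme of \citet{bertrand2023stability}, but to exploit the new Wasserstein-closeness assumption $\gW_1(p_{\theta_*},\pdata)\leq\epsilon$ to sharpen the spectral analysis, and then to translate parameter contraction into a KL bound via Taylor expansion. First I would introduce the map $F$ defined implicitly by the first-order optimality condition
\begin{align*}
G(\theta_{t+1},\theta_t) := \tfrac{1}{1+\lambda}\E_{\pdata}[\nabla_\theta \log p_{\theta_{t+1}}(x)] + \tfrac{\lambda}{1+\lambda}\E_{p_{\theta_t}}[\nabla_\theta \log p_{\theta_{t+1}}(x)] = 0,
\end{align*}
so that $\theta_{t+1}=F(\theta_t)$, and verify that $\theta_*$ is a fixed point of $F$: the first term vanishes at $\theta_{t+1}=\theta_*$ because $\theta_*$ maximizes $\E_{\pdata}[\log p_\theta]$, and the second vanishes because the score has mean zero under its own law.

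Next I would compute $DF(\theta_*)$ via the implicit function theorem. Using the identity $\partial_{\theta'}\E_{p_{\theta'}}[h(x)]=\E_{p_{\theta'}}[h(x)\nabla \log p_{\theta'}(x)^\top]$ together with the Bartlett equality $\E_{p_{\theta_*}}[\nabla^2_\theta \log p_{\theta_*}]=-I(\theta_*)$, a direct calculation yields
\begin{align*}
DF(\theta_*) = (M + \lambda N)^{-1}\lambda N,
\end{align*}
where $M := -\E_{\pdata}[\nabla^2_\theta \log p_{\theta_*}]\succeq \alpha I$ by \Cref{ass:assumptionshessian} and $N := I(\theta_*)$ is the Fisher information at $\theta_*$.

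The heart of the proof is the spectral control of this operator. Kantorovich--Rubinstein duality, together with the $L$-Lipschitz assumption on $x\mapsto\nabla^2_\theta\log p_{\theta_*}(x)$ and $\gW_1(p_{\theta_*},\pdata)\leq\epsilon$, yields $\|M-N\|_{\mathrm{op}}\leq L\epsilon$. From this one obtains, on one hand, $M+\lambda N\succeq(\alpha+\lambda(\alpha-L\epsilon))I$, hence $\|(M+\lambda N)^{-1}\|_{\mathrm{op}}\leq 1/(\alpha+\lambda(\alpha-L\epsilon))$, and on the other hand $\lambda\|N\|_{\mathrm{op}}\leq \lambda(\alpha+L\epsilon)$ (by comparison with $M$ through the same Lipschitz estimate), so
\begin{align*}
\|DF(\theta_*)\|_{\mathrm{op}} \leq \frac{\lambda(\alpha+L\epsilon)}{\alpha+\lambda(\alpha-L\epsilon)} =: r,
\end{align*}
which is strictly less than one exactly when $\lambda<\alpha/(2L\epsilon)$. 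This is the strict improvement over \citet{bertrand2023stability}, who did not exploit the closeness of $p_{\theta_*}$ and $\pdata$ and were therefore limited to $\lambda<1/2$.

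Finally a Banach-type local contraction argument applied to $F$ gives $\|\theta_t-\theta_*\|=\gO(r^t)$ from any $\theta_0$ in a sufficiently small neighborhood of $\theta_*$. Combined with the second-order Taylor expansion
\begin{align*}
\KL(p_{\theta_*}\|p_\theta) = \tfrac{1}{2}(\theta-\theta_*)^\top N(\theta-\theta_*) + \gO(\|\theta-\theta_*\|^3),
\end{align*}
this yields $\KL(p_{\theta_*}\|p_{\theta_t})=\tilde\gO(r^{2t})$, as claimed. The main obstacle I anticipate is the spectral step: in particular, carefully justifying the upper bound on $\lambda\|N\|_{\mathrm{op}}$ through comparison with $M$ using only the Wasserstein/Lipschitz data in a neighborhood of $\theta_*$ is the delicate piece that unlocks arbitrarily large $\lambda$ once $\epsilon$ is small. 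A secondary subtlety is absorbing the cubic Taylor remainder uniformly into the $\tilde\gO$ constant, which relies on a uniform smoothness control on $\theta\mapsto \log p_\theta$ already implicit in \Cref{ass:assumptionshessian}.
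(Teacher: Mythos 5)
Your overall architecture matches the paper's: define the fixed-point map from the first-order optimality condition, identify its Jacobian at $\theta_*$ as $(M+\lambda N)^{-1}\lambda N$ (the paper writes this as $(I+\lambda A^{-1}B)^{-1}\lambda A^{-1}B$ with $A=M$, $B=N$), show its operator norm is at most $r=\tfrac{\lambda(\alpha+L\epsilon)}{\alpha+\lambda(\alpha-L\epsilon)}<1$ iff $\lambda<\alpha/(2L\epsilon)$, conclude $\|\theta_t-\theta_*\|=\tilde{\gO}(r^t)$ by local contraction, and convert to a KL bound via a Taylor expansion whose first-order term vanishes at $\theta_*$. The lower bound $M+\lambda N\succeq(\alpha+\lambda(\alpha-L\epsilon))I$ and the final Taylor step are essentially the paper's (the paper uses an exact second-order remainder plus a local bound on the Hessian rather than a cubic remainder, but that is cosmetic).

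The gap is exactly where you flagged it, and it is real: the bound $\lambda\|N\|_{\mathrm{op}}\leq\lambda(\alpha+L\epsilon)$ does not follow from the hypotheses. \Cref{ass:assumptionshessian} gives $M\succeq\alpha I$, i.e.\ $\alpha$ is a \emph{lower} bound on the spectrum of $M$; nothing controls $\lambda_{\max}(M)$, so the comparison $\|N\|\leq\|M\|+L\epsilon$ only yields $\lambda(\|M\|+L\epsilon)$, which can be arbitrarily larger than $\lambda(\alpha+L\epsilon)$. Splitting $\|(M+\lambda N)^{-1}\lambda N\|\leq\|(M+\lambda N)^{-1}\|\cdot\lambda\|N\|$ therefore gives the rate $\tfrac{\lambda(\|M\|+L\epsilon)}{\alpha+\lambda(\alpha-L\epsilon)}$ and a contraction threshold that degrades with the conditioning of $M$, rather than the claimed $\lambda<\alpha/(2L\epsilon)$. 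The paper (\Cref{lem:upperboundlambda}) circumvents this by never bounding $N$ in isolation: it preconditions by $M^{-1}$, keeps the Jacobian in the form $(I+\lambda M^{-1}N)^{-1}\lambda M^{-1}N$, and uses $\|M^{-1}N-I\|\leq\|M^{-1}\|\,\|N-M\|\leq L\epsilon/\alpha$ (Kantorovich--Rubinstein plus $M\succeq\alpha I$) to get both $\|M^{-1}N\|\leq 1+L\epsilon/\alpha$ and, via a Neumann-series estimate, $\|(I+\lambda M^{-1}N)^{-1}\|\leq\tfrac{1}{1+\lambda(1-L\epsilon/\alpha)}$. The essential observation is that $M^{-1}N$ is an $L\epsilon/\alpha$-perturbation of the identity \emph{independently of how large $\|M\|$ is}; that is the piece your argument is missing, and it is precisely what makes arbitrarily large $\lambda$ admissible as $\epsilon\to 0$.
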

 \vspace{-3mm}
\end{mdframed}
%
%
\subsubsection{Iterative retraining on a mixture of real and curated samples}
\label{subsec:mix_real_filtered}
\looseness=-1
Interestingly when curating the synthetic samples we cannot expect stability around the optimal distribution ($\theta_*$ in \Cref{lem:resultindistribution}) since it is no longer a fixed point of the retraining loop.
We will instead show a closeness result in KL divergence combined with an increasing property of the expectation of the reward, which bears close connections to RLHF. We therefore now study the setting described in \Cref{eq_pb_statement} where the synthetic samples are curated using a discrete $K$-choice model and real data is reused at each step ($\lambda < \infty$). In other words, we suppose that the retraining step uses a mixture of a reference distribution and a curated distribution as
\begin{align}
    p_{t+1}(x) = \tfrac{1}{1+\lambda}\prefer(x) + \tfrac{\lambda}{1+\lambda}p_{t}(x)\cdot H^K_{p_t}(x)
    \text{($H^K_{p_t}$ is defined in \Cref{eq_hpt}) }
    \enspace.
    \label{eq_recursion_mix}
\end{align}
In \Cref{lem:reward_increases_mixture}, we prove that when retraining on a mixture of real and curated samples, the reward increases with respect to the initialization:
\begin{mdframed}[style=MyFrame2]
\begin{restatable}{thm}{improvement}
\looseness=-1
Consider the process $(p_{t})$ defined in eq. \ref{eq_recursion_mix}, with $p_0 = \prefer$, then, for all $t\geq1$
\begin{equation*}
   \E_{p_t}\left[e^{r(x)}\right] \geq \E_{\prefer}\left[e^{r(x)}\right] + \frac{\lambda}{(1+\lambda)^3}\frac{(K-1)\Var_{\prefer}\left[e^{r(x)}\right]}{Ke^{r_*}}
   \enspace.
\end{equation*}
\label{lem:reward_increases_mixture}
\end{restatable}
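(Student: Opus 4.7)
The plan is to unfold the recursion \Cref{eq_recursion_mix} by one step, apply the single-step reward improvement of \Cref{lem:reward_increasing_better} to the curated component, and then use the mixture structure of $p_t$ to obtain a bound that does not deteriorate with $t$. Setting $R := \E_{\prefer}[e^{r(x)}]$, $V_0 := \Var_{\prefer}[e^{r(x)}]$, and $\delta_t := \E_{p_t}[e^{r(x)}] - R$, the claim reduces to showing $\delta_t \geq \tfrac{\lambda(K-1)V_0}{K(1+\lambda)^3 e^{r_*}}$ for all $t\geq 1$.

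Linearity of expectation applied to \Cref{eq_recursion_mix} yields $\E_{p_{t+1}}[e^{r(x)}] = \tfrac{1}{1+\lambda}R + \tfrac{\lambda}{1+\lambda}\E_{p_t\cdot H^K_{p_t}}[e^{r(x)}]$. Since $p_0=\prefer$ satisfies \Cref{ass:rewardbounded} and both curation and mixing merely reweight without enlarging the support, every $p_t$ inherits $r(x)\leq r_*$ almost surely, so \Cref{lem:reward_increasing_better} applied to $p_t\cdot H^K_{p_t}$ gives $\E_{p_t\cdot H^K_{p_t}}[e^{r(x)}] \geq \E_{p_t}[e^{r(x)}] + \tfrac{K-1}{K}\tfrac{\Var_{p_t}[e^{r(x)}]}{e^{r_*}}$. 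The crucial observation is that, for every $t\geq 1$, the distribution $p_t$ is itself a mixture containing $\prefer$ with weight $\tfrac{1}{1+\lambda}$; applying the standard law-of-total-variance identity $\Var_{\alpha\mu+(1-\alpha)\nu}[f] = \alpha\Var_\mu[f] + (1-\alpha)\Var_\nu[f] + \alpha(1-\alpha)(\E_\mu f-\E_\nu f)^2$ to this mixture yields the uniform lower bound $\Var_{p_t}[e^{r(x)}] \geq \tfrac{V_0}{1+\lambda}$.

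Substituting these two ingredients into the decomposition of $\E_{p_{t+1}}[e^{r(x)}]$ produces the recursive inequality
\[
\delta_{t+1} \geq \tfrac{\lambda}{1+\lambda}\,\delta_t + \tfrac{\lambda(K-1)V_0}{K(1+\lambda)^2 e^{r_*}} \quad (t\geq 1),
\]
while a direct computation at $t=0$ already yields $\delta_1 \geq \tfrac{\lambda(K-1)V_0}{K(1+\lambda)e^{r_*}} \geq 0$. A short induction then propagates $\delta_t\geq 0$ through all $t$, and the advertised uniform lower bound follows immediately from this recursion.

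The main obstacle will be establishing a bound that is uniform in $t$. A direct chaining that only uses that the curation step cannot decrease the expected reward yields $\delta_{t+1} \geq \tfrac{\lambda}{1+\lambda}\delta_t$, and hence $\delta_t \gtrsim \bigl(\tfrac{\lambda}{1+\lambda}\bigr)^t V_0$, which decays geometrically to zero as $t$ grows. What saves the argument is that the mixture structure of the retraining rule re-injects a fixed fraction of the initial variance at every iteration, so the \emph{source term} in the recursion never dies out; making this precise via the variance-of-mixture inequality is the key step.
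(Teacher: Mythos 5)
Your proof is correct and follows essentially the same route as the paper's: unroll the mixture recursion \Cref{eq_recursion_mix} one step, apply the single-step improvement of \Cref{lem:reward_increasing_better} to the curated component, and lower-bound $\Var_{p_t}\left[e^{r(x)}\right]$ by a fixed fraction of $\Var_{\prefer}\left[e^{r(x)}\right]$ using the fact that $p_t$ contains $\prefer$ with mixture weight $\tfrac{1}{1+\lambda}$. The one substantive difference is that your law-of-total-variance argument gives the sharper factor $\tfrac{1}{1+\lambda}$ where the paper asserts (without justification) only $\tfrac{1}{(1+\lambda)^2}$, so your recursion in fact delivers the stronger constant $\tfrac{\lambda}{(1+\lambda)^2}$ in place of $\tfrac{\lambda}{(1+\lambda)^3}$, which of course implies the stated bound.
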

\vspace{-3mm}
\end{mdframed}
\xhdr{Discussion} A first interesting case is taking the reference distribution $\prefer$ equal to $\pdata$.
In that case, we recover the fact that $\pdata$ is not a fixed point of the retraining loop as soon as different reward values have non-zero probabilities to happen (we recover the result from \Cref{lem:fixed_point}). In fact, \Cref{lem:reward_increases_mixture} shows that such a process initialized at $\pdata$ will increase the reward expectation. The second interesting case is taking $\prefer = p_0$ the generative model at initialization. In that case, retraining on a mixture of samples from the initial model and curated samples from the current model improves the reward expectation with respect to initialization.

After showing that such a retraining loop improves the expected reward, we can conversely show that this process does not deviate too much from $\prefer$.
%
\begin{mdframed}[style=MyFrame2]
\begin{restatable}{thm}{klupperbound}
Consider the process $(p_{t})$ defined in \Cref{eq_recursion_mix}, with $p_0 = \prefer$.
In addition, suppose that $\lambda < \tfrac{1}{K-1}$, then, for all $t\geq1$
\begin{equation*}
   \KL(p_t||\prefer)\leq -\log\left({1-\lambda(K-1)}\right)
   \enspace.
\end{equation*}
\label{lem:kl_upper_bound_mixture}
\end{restatable}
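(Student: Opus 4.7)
The plan is to uniformly bound the density ratio $f_t(x) := p_t(x)/\prefer(x)$ in $L^\infty$ and then conclude via $\KL(p_t\|\prefer) = \E_{p_t}[\log f_t]$. First, since $p_0 = \prefer$ and the recursion \Cref{eq_recursion_mix} gives $p_{t+1}(x) \geq \tfrac{1}{1+\lambda}\prefer(x)$ pointwise, a trivial induction shows $\mathrm{supp}(p_t) = \mathrm{supp}(\prefer)$ for all $t$, so the ratio is well-defined almost everywhere. Dividing \Cref{eq_recursion_mix} through by $\prefer$ yields the scalar recursion
\begin{equation*}
    f_{t+1}(x) \;=\; \frac{1}{1+\lambda} \;+\; \frac{\lambda}{1+\lambda}\, f_t(x)\, H^K_{p_t}(x).
\end{equation*}

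Next, the paper already notes (just after \Cref{eq_hpt}) that $0 < H^K_{p_t}(x) \leq K$ uniformly in $x$ and $t$, which gives the pointwise upper bound
\begin{equation*}
    f_{t+1}(x) \;\leq\; \frac{1}{1+\lambda} \;+\; \frac{K\lambda}{1+\lambda}\, f_t(x).
\end{equation*}
The affine map $u \mapsto (1 + K\lambda u)/(1+\lambda)$ has a unique fixed point $C := 1/(1 - \lambda(K-1))$, which is finite and at least $1$ precisely under the assumption $\lambda < 1/(K-1)$. I would then show by induction that $f_t \leq C$ pointwise: the base case $f_0 \equiv 1 \leq C$ is immediate, and if $f_t \leq C$ then $f_{t+1}(x) \leq (1 + K\lambda C)/(1+\lambda) = C$ by the fixed-point identity for $C$.

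Finally, integrating $\log f_t(x) \leq \log C$ against $p_t$ gives $\KL(p_t\|\prefer) = \E_{p_t}[\log f_t] \leq \log C = -\log(1 - \lambda(K-1))$, which is exactly the claimed bound. The main conceptual point is the choice of monotone quantity: a direct induction on $\KL(p_t\|\prefer)$, or an attempt to apply joint convexity of KL to write $\KL(p_{t+1}\|\prefer) \leq \tfrac{\lambda}{1+\lambda}\KL(p_t H^K_{p_t}\|\prefer)$, both appear to require sign control on $\log f_t$ that is unavailable. Controlling the sup-norm of $f_t$ sidesteps this, and the threshold $\lambda = 1/(K-1)$ emerges naturally as the value at which the fixed point $C$ diverges.
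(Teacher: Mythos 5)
Your proposal is correct and follows essentially the same route as the paper's proof: bound the density ratio $p_t(x)/\prefer(x)$ uniformly by $\tfrac{1}{1-\lambda(K-1)}$ via induction using $H^K_{p_t}(x)\leq K$, then integrate the logarithm of that bound. Your observation that $\tfrac{1}{1-\lambda(K-1)}$ is the fixed point of the affine map $u\mapsto \tfrac{1+K\lambda u}{1+\lambda}$ is just a cleaner phrasing of the inequality the paper verifies directly (it holds with equality).
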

\vspace{-5mm}
\end{mdframed}
%
Applying \Cref{lem:kl_upper_bound_mixture} with $\prefer = \pdata$ shows that retraining on a mixture of real and curated synthetic samples does not deviate too much from the data distribution. On the other hand, when setting $\prefer$ to be any initial model distribution, we see that reusing samples from the initial model stabilizes the retraining loop around initialization.

\xhdr{Connection with RLHF} \Cref{lem:reward_increases_mixture} and \Cref{lem:kl_upper_bound_mixture} together emphasize that retraining on a mixture of reference and filtered synthetic data bears important connections with RLHF. Indeed, the RLHF objective is composed of both a reward maximization term and a KL regularization between the current and initial model. In turn, \Cref{lem:reward_increases_mixture} states that the expected reward increases and \Cref{lem:kl_upper_bound_mixture} shows that the KL divergence with respect to initialization remains bounded.
The upper bound on the KL divergence further indicates that setting $K$ small, \ie using fewer samples for comparison acts as a regularizer, as previously noticed.

\section{Related work}
\label{sec:related_work}
\xhdr{Iterative retraining on synthetic data and model collapse} The study of the retraining loop of a generative model on synthetic data has witnessed a recent surge of interest. \cite{alemohammad2023self,shumailov2023curse} first evidenced catastrophic degradation of the generated data in the fully synthetic loop. \cite{bertrand2023stability} mitigate these conclusions in the setting where the model is retrained on a mixture of synthetic and real data and they show the stability of the process around the data distribution. \cite{briesch2023large} specifically focus on large langage models and \cite{hataya2023will, martinez2023combining} study large scale datasets. A recent theoretical push by \cite{dohmatob2024model,dohmatob2024tale} provides bounds on the performance degradation in the regression setting as well as modified scaling laws.
Finally recent works, \citet{Wyllie2024,chen2024would} study the emergence or amplification of biases in self-consuming loops.

\xhdr{Aligning models with human preferences} With the urgent and critical safety concerns of public deployment, the need to align models with human preferences has gained significant importance. RLHF is a popular reinforcement learning technique to align an already pretrained and finetuned model on human preferences \citep{stiennon2020learning,christiano2017deep,lee2021pebble,ouyang2022training,shin2023benchmarks}.
It consists of two steps: first fitting a reward $r(x)$ on human preferences using a dataset of pairwise human comparisons and then, maximizing the expected reward over the model distribution. A Kullback-Leibler regularization to the initial model is further used during the maximization step to avoid reward hacking \citep{skalse2022defining, chen2024odin} or catastrophic forgetting \citep{korbak2022reinforcement}.
%
Variants of RLHF have recently been proposed such as Direct Preference Optimization (DPO) which maximizes the reward directly without modeling it \citep{rafailov2024direct}, Identity Preference Optimization (IPO) \citep{azar2024general} or Kahneman-Tversky Optimization (KTO) \citep{ethayarajh2024kto}.

\section{Experiments}
\label{sec:experiments}
This section aims to empirically illustrate our previous theoretical results on how curation impacts the self-consuming loop. In \Cref{alg:iterative_training}, we recall and detail the different steps performed in our experiments.

\xhdr{Synthetic datasets}
\looseness=-1
We first focus on two synthetic datasets: a mixture of Gaussians and the two moons dataset. For both datasets, we study the two settings of solely retraining on curated synthetic samples ($\lambda=\infty$) and mixed datasets ($\lambda=1$). In \Cref{fig:mog}, we iteratively retrain a denoising diffusion probabilistic model (DDPM, \citealt{ho2020denoising}) on a mixture of $8$ Gaussians. The reward $r(x)$ used for the discrete choice model is the clipped negative Euclidean distance to one of the centers of the Gaussians $x_*$, \ie $r(x):= -\gamma \max\{0 , \|x-x_*\| - r_{\rm{min}}\}$ where we choose
$\gamma = 10, r_{\rm{min}}=1$. Clipping the distance is used to ensure that the process does not collapse to a single point. Indeed applying \Cref{lem:klconvergence}, we know that the density will converge to a renormalized Gaussian distribution restricted to the ball centered at $x_*$ of radius $r_{\rm{min}}$. In \Cref{fig:two_moons}, we plot the retraining curves on the two moons dataset: to compute the reward, we use an MLP classifier with $2$ hidden layers of width $512$ which yields probabilities $q_0(x), q_1(x)$ for each class.
The reward is then defined as : $r(x):= \gamma q_0(x)$, $\gamma>0$. Both \Cref{fig:mog} and \Cref{fig:two_moons} illustrate that retraining on solely curated samples induces collapse to regions that maximize the reward: respectively one mode of the MoG or one single moon. On the other hand, the use of real data results at the same time both in stability and higher density in high reward regions. Further experimental details are provided in \Cref{sec:appendix_experiments}.
\subsection{Natural images on CIFAR10}
\xhdr{Setting} We train a normalizing flow using optimal transport conditional flow matching \citep{lipman2022flow,Shaul_Lipman2023,tong2023conditional} with the \emph{torchcfm} library \citet{tong2023simulation,tong2024improving}. The initial model has been pretrained on the $50000$ train images of the CIFAR-10 dataset \citep{krizhevsky2009learning}.
At each iteration, we generate $5 \cdot 10^4$ samples using the current model from which we keep $2.5 \cdot 10^3$ samples filtered by discrete $K$-choice comparisons. The reward $r(x)$ is computed using the class probabilities $q_0(x), \dots, q_9(x)$ from a \emph{pretrained} VGG11 classifier \citep{simonyan2014very} with $92.39\%$ test accuracy. Due to the expensive compute cost of retraining a generative model for multiple iterations (c.f. \Cref{sec:compute_cost}), we plot only one run on each figure. To ensure the reproducibility of our results, we plot the retraining curves for $3$ independent runs in \Cref{fig:confidence_std} in the appendix, illustrating that they have small variance.

\xhdr{Using probability of one class as reward} As a first experiment, we filter samples following the probability of the classifier on a predefined class. We arbitrarily chose the class $0$ corresponding to planes. The reward is then defined as $r(x) = \gamma \cdot q_0(x)$, $\gamma > 0$.
We plot the evolution of the class proportions as well as the averaged reward across $10$ retraining steps in \Cref{fig:fully_synth} with $\gamma=5$. \Cref{fig:fully_synth} shows collapse to the single plane class as the reward increases monotonically, illustrating \Cref{lem:reward_increasing_better}.

\begin{figure}[tb]
    \centering
    \includegraphics[width=.5\linewidth]{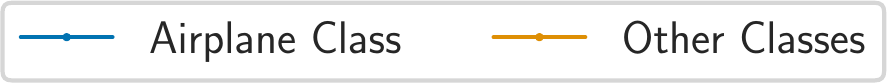}
    \includegraphics[width=1\linewidth]{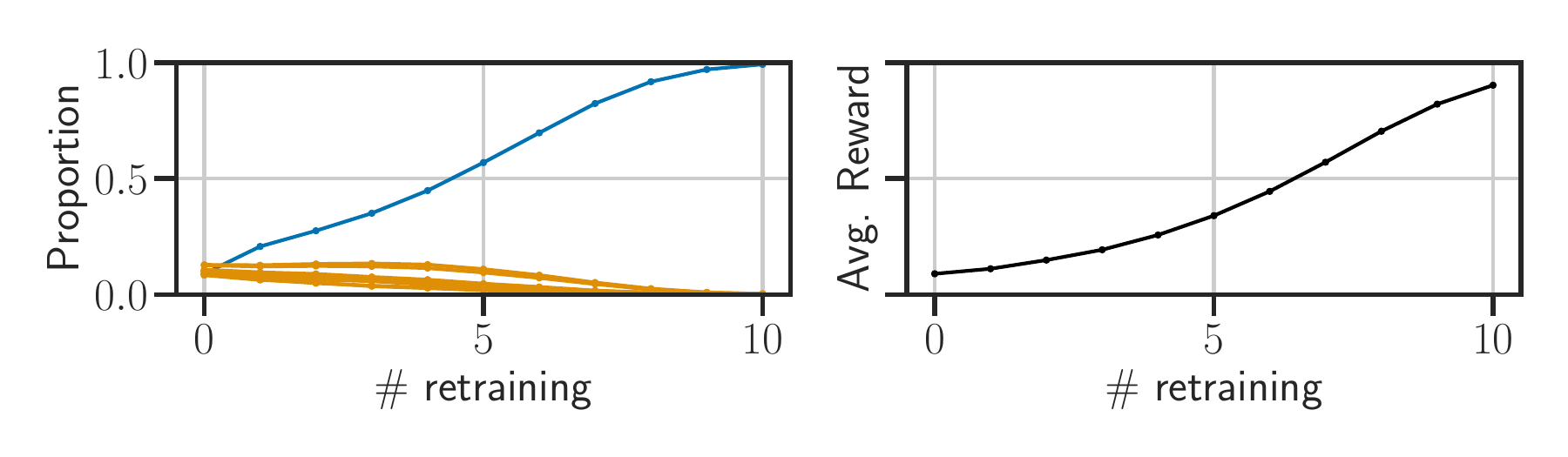}
    \caption{\textbf{CIFAR-10}. Evolution of the proportion of the class `Airplane' and of the $9$ other classes when filtering on curated synthetic samples with reward $r(x)=\gamma \cdot q_0(x)$}
    \label{fig:fully_synth}
\end{figure}
%
\xhdr{Using the confidence of the classifier as a reward: the emergence of bias} As a second experiment, we use the confidence of the classifier as a reward, \ie $r(x) = \gamma \cdot \max_{0\leq i\leq 9}q_i(x)$, $\gamma>0$. As written, the reward is therefore uncorrelated from the class but, remains implicitly correlated to it by the fact that \emph{the classifier statistics are class dependent}. In \Cref{fig:confidence} we plot the evolution of the class proportions as well as the average reward. As expected by our theoretical results in \Cref{sec:theoretical_results}, the average reward increases monotonically. On the other hand, we clearly see that the class proportions become more and more heterogeneous throughout the retraining loop. While confirming our theoretical study this plot therefore additionally shows that retraining on filtered samples increases bias, in a setting where the reward is implicitly correlated to diversity. Taking a step back, this has strong societal and ethical implications as it may imply that in a filtered internet biases may emerge or strengthen as we explain in \Cref{sec:broader_impact}.

\xhdr{Reusing real samples: stability and reward augmentation} Finally, we illustrate our results from \Cref{subsec:mixture_real_synth} by mixing real and filtered synthetic samples with hyperparameter $\lambda = \tfrac{1}{2}$. \Cref{fig:confidence} shows that the process remains stable as the proportion of classes remains approximately uniform (as suggested by \Cref{lem:reward_increases_mixture}). On the other hand, the average reward increases before stabilizing as predicted by \Cref{lem:reward_increases_mixture}.

\begin{figure}[h]
    \centering
    \includegraphics[width=.7\linewidth]{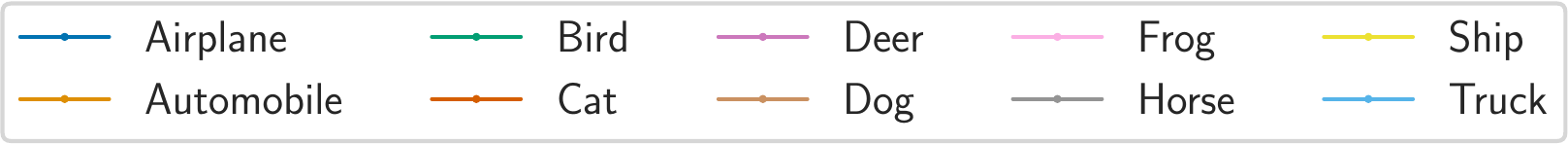}
    \includegraphics[width=1\linewidth]{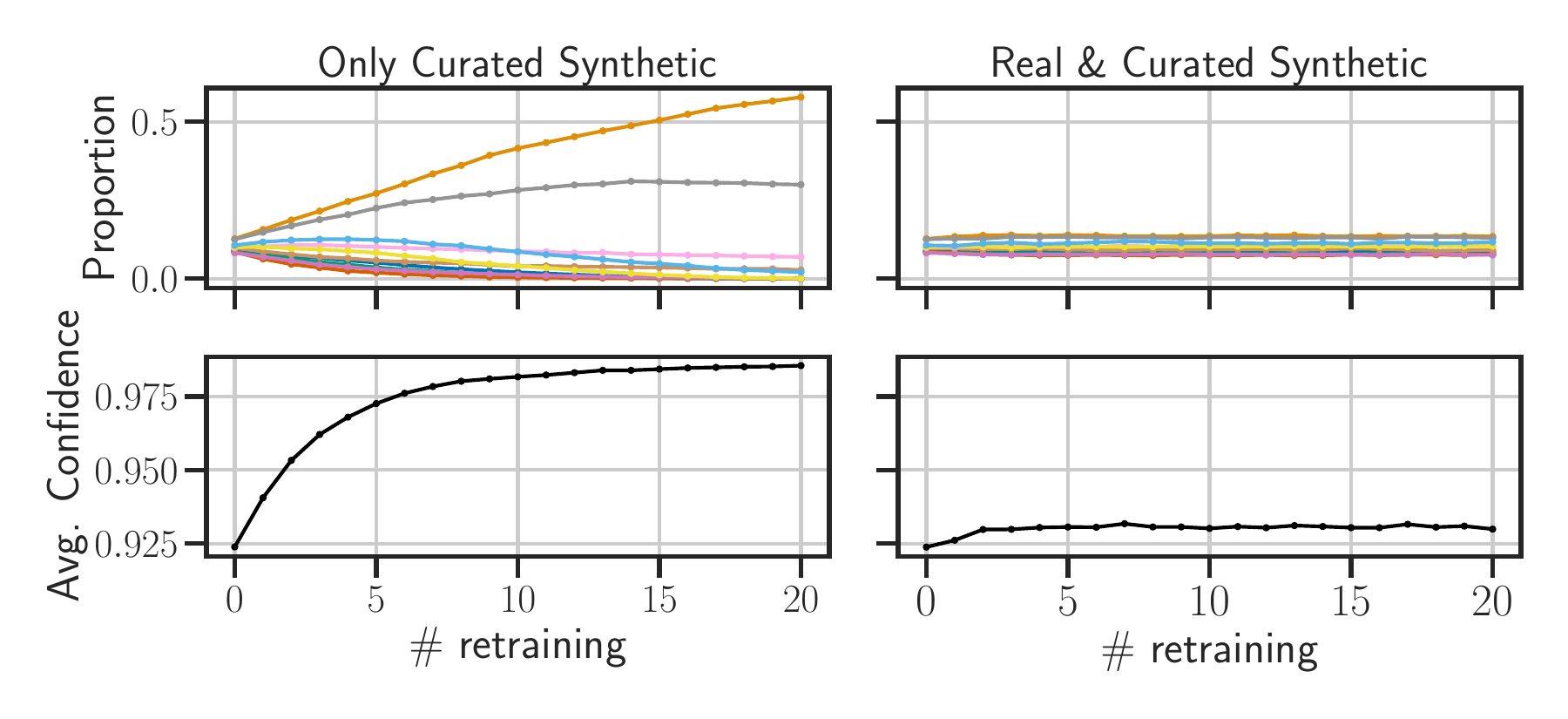}
    \caption{\textbf{CIFAR-10}. Evolution of the proportion of each class and the average reward $r(x)$ when filtering based on the confidence of a classifier. On the left, retraining is done solely on the curated synthetic samples which results in the emergence of proportion biases. On the right, retraining is performed on a mixture of real and curated synthetic samples which results in both increased stability and still reward augmentation.
    }\label{fig:confidence}
    \vspace{-5mm}
\end{figure}

\section{Conclusion and open questions}
\label{sec:conclusion}
We study the impact of data curation on the training of generative models in the self-consuming loop.
We provide theoretical results demonstrating that the expected reward underlying the curation process increases and its variance collapses (\Cref{lem:reward_increasing_better}) as well as a convergence result (\Cref{lem:klconvergence}) for the generative model.
We additionally provide stability guarantees when reusing real data at each step (\Cref{lem:reward_increases_mixture} and \Cref{lem:kl_upper_bound_mixture}) 
establishing close connections with RLHF and preference optimization.
Our work sheds light and theoretically grounds a novel phenomenon: increasing the proportion of curated synthetic data on the Web automatically optimizes preferences for future trained large models.
A limitation is that we do not propose an algorithm to address emerging issues like bias amplification as we feel it goes beyond the scope of our paper and is a substantially complex field already intensively explored \citep{grover2019bias,Wyllie2024,chen2024would}.
We believe, however, that it should be a research priority and constitutes an interesting avenue for future work.
Another interesting direction is to study the impact of the particular reward function underlying filtering (confidence, quality, diversity...) on the emerging bias amplification.

\section{Broader impacts}
\label{sec:broader_impact}
Training and alignment of large generative models are prone to substantial ethical concerns regarding their alignment objective \citep{shen2023large}, representational disparities of the training datasets \citep{clemmensen2022data}, or the presence of harmful images in the datasets \citep{birhane2021multimodal,schramowski2023safe,birhane2024into}. Our work mostly focuses on the impact of the curation of synthetic datasets which itself heavily depends on the agent performing the curation and its underlying reward function. In particular the documentation of the \href{https://github.com/JD-P/simulacra-aesthetic-captions}{Simulacra Aesthetic Captions} dataset \citep{sacdataset} alerts that the human-based curation step is performed by a group of individuals that lacks diversity, mostly from Western, Educated, Industrialized, Rich, and Democratic (WEIRD) individuals~\citep{henrich2010weirdest}. A similar bias is likely occurring in the JourneyDB~\citep{journeydb} dataset and, more generally, in the synthetic data appearing on the web.  
However, our work mostly revolves around a theoretical analysis and raises awareness of the implicit alignment and potential bias amplification of self-consuming generative models.
We therefore firmly believe that the potential benefits of this awareness outweigh the potential unforeseen negative consequences of this work.

\section{Acknowledgements}

We sincerely thank Sophie Xhonneux for providing feedback and corrections on the paper. We further thank Mats L. Richter and William Buchwalter for fruitful discussions about the experiments. We thank Mila Cluster for access to computing resources, especially GPUs. AJB is partially funded by an NSERC Post-doc fellowship and an EPSRC Turing AI World-Leading Research Fellowship.

\bibliography{bibliography}
\bibliographystyle{abbrvnat}
\clearpage
\appendix
\section{Appendix / supplemental material}
\label{sec:appendix}
\subsection{Proofs}

\subsubsection{Proof of \Cref{lem:pairwisedistrib}}
\label{sec:proof_pairwisedistrib}
\pairwisedistrib*

\begin{proof}
First, by minimization of the cross-entropy, we know that for any distribution $q$, $\argmax_p\E_{x\sim q}[\log(p(x))]=q$. Therefore, if $p_{t+1}$ is the solution of \Cref{eq_pb_statment_infinite_lambda}, then we have directly that $p_{t+1}$ has the law of $\hat{x}$, where $\hat x$ is defined in \Cref{eq:sampling,eq:BT}.
We can now specify explicitly the distribution $p_{t+1}$. Let $p_t$ be the current distribution at time $t$. We first sample $x_1, \cdots, x_K\overset{i.i.d.}{\sim} p_t$. and then independently sample an index $i_K$ following the generalized Bradley-Terry model:
    \begin{equation}
        \sP(i_K=i|x_1, \cdots, x_K)=\frac{e^{r(x_i)}}{\sum_{k=1}^Ke^{r(x_j)}}.
    \end{equation}
    By noting that the events $\{i_K=i\}_{i=1}^K$ are disjoint, we can write the resulting density:

    \begin{equation*}
        p_{t+1}(x)= \sum_{i=1}^K\int_{y_j, j\neq i}p_t(y_1, \cdots, y_{i-1}, x, y_{i+1}, \cdots, y_K)\sP(i_K=i|x, y_j, j\neq i)\prod_{j\neq i}dy_j.
    \end{equation*}
    By independence since the $K$ samples are drawn i.i.d. and since the Bradley-Terry formula is symmetric, all $K$ terms in the sum are equal.
This leads to rewriting:
\begin{align*}
        p_{t+1}(x)&= K\int_{y_1, \cdots, y_{K-1}}p_t(y_1, \cdots, y_{K-1}, x)\sP(i_K=K|y_1, \cdots, y_{K-1}, x)dy_1\cdots dy_{K-1}\\
        &=p_t(x)K\int_{y_1, \cdots, y_{K-1}}\frac{e^{r(x)}}{e^{r(x)}+\sum_{i=1}^{K-1}e^{r(y_i)}}p_t(y_1)\cdots p_t(y_{K-1})dy_1\cdots dy_{K-1}\\
        &=p_t(x)\cdot H_{p_t}^K(x)
    \end{align*}
    where
    \begin{equation}
        H_{p_t}^K(x) = \int_{y_1, \cdots, y_{K-1}}\frac{e^{r(x)}}{\frac{e^{r(x)}}{K}+\sum_{i=1}^{K-1}\frac{e^{r(y_i)}}{K}}p_t(y_1)\cdots p_t(y_{K-1})dy_1\cdots dy_{K-1}
        \label{eq:transfer_function}
    \end{equation}

We now can study the limit $K\rightarrow \infty$. Consider the random variable $X = e^{r(x)}$ as $x\sim p_t$. By assumption, $\E[X]<\infty$. We can therefore apply the law of large numbers. Namely, if $X_1,\cdots, X_{K-1}$ are sampled i.i.d.:

    \begin{equation}
        \frac{1}{K-1}(X_1+\cdots+X_{K-1}) \overset{\sP}{\rightarrow} \E[X]
    \end{equation}

    Furthermore, for all $x, y_1,\dots, y_{K-1},$ we have $0\leq\frac{e^{r(x)}}{e^{r(x)}+ \sum_{i=1}^{K-1}e^{r(y_i)}}\leq 1$ and $\frac{e^{r(x)}}{K} \xrightarrow{K\rightarrow\infty} 0$.

    Rewriting \Cref{eq:transfer_function}:

    \begin{align*}
        H_{p_t}^K(x) = \int_{y_1, \cdots, y_{K-1}}\frac{e^{r(x)}}{\frac{e^{r(x)}}{K}+\frac{K-1}{K}\frac{\sum_{i=1}^{K-1}e^{r(y_i)}}{K-1}}p_t(y_1)\cdots p_t(y_{K-1})dy_1\cdots dy_{K-1}
    \end{align*}

    we get that:

    \begin{align*}
        H_{p_t}^K(x) \xrightarrow{K\rightarrow \infty}\frac{e^{r(x)}}{\E_{y\sim p_t}\left[e^{r(y)}\right]}
    \end{align*}
    which directly implies
    \begin{align*}
        p_{t+1}(x) \xrightarrow{K\rightarrow \infty}p_t(x)\frac{e^{r(x)}}{\E_{y\sim p_t}\left[e^{r(y)}\right]}
    \end{align*}
\end{proof}

\subsubsection{Additional lemma: the reward expectation is increasing}

Without assuming that the reward is bounded, we can show using Jensen inequality that the reward expectation increases at each retraining step.

\begin{mdframed}[style=MyFrame2]
\begin{restatable}{lem}{expectationincreasing}
When performing $K$-wise filtering, the expected reward increases, i.e., $\forall t \geq 0$:
\begin{equation}
    \E_{p_{t+1}}\left[e^{r(x)}\right] \geq  \E_{p_t} \left[e^{r(x)}\right]
    \enspace.
    \label{eq:rewardincreasing}
\end{equation}
\label{lem:reward_increasing}
\end{restatable}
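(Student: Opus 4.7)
The plan is to leverage the closed-form expression $p_{t+1}(x) = p_t(x) \cdot H_{p_t}^K(x)$ derived in the proof of \Cref{lem:pairwisedistrib} (see \Cref{eq_hpt}) and reduce the claim to a pointwise inequality on $K$ i.i.d.\ samples. Writing out the expectation and substituting the definition of $H_{p_t}^K$ gives
\begin{align*}
    \E_{p_{t+1}}[e^{r(x)}]
    &= \int p_t(x) \, e^{r(x)} \, H_{p_t}^K(x)\, dx \\
    &= \E_{x_1,\dots,x_K \sim p_t}\!\left[\sum_{k=1}^K e^{r(x_k)} \cdot \frac{e^{r(x_k)}}{\sum_{j=1}^K e^{r(x_j)}}\right],
\end{align*}
where I have used the symmetry of the $K$ samples to fold the factor $K$ back into a full sum over $k$.

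Setting $Y_k := e^{r(x_k)} > 0$, the right-hand side equals $\E\!\left[\tfrac{\sum_k Y_k^2}{\sum_k Y_k}\right]$. The key inequality is then Cauchy--Schwarz (equivalently, the convexity of $y\mapsto y^2$, i.e.\ a Jensen argument on the empirical distribution of $Y_1,\dots,Y_K$): $K \sum_k Y_k^2 \geq (\sum_k Y_k)^2$, so that pointwise
\begin{equation*}
    \frac{\sum_{k=1}^K Y_k^2}{\sum_{k=1}^K Y_k} \;\geq\; \frac{1}{K}\sum_{k=1}^K Y_k.
\end{equation*}
Taking expectations and using that the $Y_k$ are i.i.d.\ with mean $\E_{p_t}[e^{r(x)}]$ yields $\E_{p_{t+1}}[e^{r(x)}] \geq \E_{p_t}[e^{r(x)}]$.

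There is essentially no obstacle beyond bookkeeping: the only step requiring care is justifying the symmetric rewriting of $\E_{p_{t+1}}[e^{r(x)}]$ as a sum of $K$ identical terms, which follows directly from the i.i.d.\ sampling in \Cref{eq:sampling} and the symmetry of the Bradley--Terry selection probability in \Cref{eq:BT}. Note that the proof requires no boundedness of $r$: both sides can be $+\infty$, and in the non-degenerate case where $0 < \E_{p_t}[e^{r(x)}] < \infty$ the gap in the inequality is controlled by the sample variance of the $Y_k$'s, which foreshadows the quantitative variance-based improvement stated in \Cref{lem:reward_increasing_better}.
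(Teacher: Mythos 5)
Your proof is correct, but it takes a different route from the paper's own proof of this lemma. The paper keeps the representation $\E_{p_{t+1}}[e^{r(x)}]=\int e^{r(x)}p_t(x)H^K_{p_t}(x)\,dx$ and applies Jensen's inequality twice: first, conditioning on $x$, it uses convexity of $y\mapsto a/(b+y)$ to lower-bound $H^K_{p_t}(x)$ by replacing the empirical mean of the other $K-1$ samples with its expectation, and then it uses convexity of $y\mapsto y^2/(y/K+c)$ to handle the outer integral. You instead symmetrize over all $K$ samples, write the target as $\E\bigl[\sum_k Y_k^2/\sum_k Y_k\bigr]$, and apply Cauchy--Schwarz \emph{pointwise}; the symmetrization step is legitimate (it is just the tower property applied to the Bradley--Terry selection, consistent with the derivation of $H^K_{p_t}$ in \Cref{sec:proof_pairwisedistrib}). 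Your route is arguably the more natural one here: the Lagrange identity $K\sum_k Y_k^2-\bigl(\sum_k Y_k\bigr)^2=\sum_{i<j}(Y_i-Y_j)^2$ behind your Cauchy--Schwarz step is exactly the decomposition the paper uses to prove the quantitative \Cref{lem:reward_increasing_better}, so your argument is the qualitative shadow of that stronger computation and directly exposes the variance gap, as you note. What the paper's double-Jensen proof buys in exchange is an explicit pointwise lower bound on the multiplier $H^K_{p_t}(x)$ itself, which is a statement about the transfer function rather than only about the integrated reward. Both proofs are valid without any boundedness of $r$, since all quantities are nonnegative and the pointwise inequality transfers to expectations in $[0,\infty]$.
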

\end{mdframed}

\begin{proof}
Consider the random variable $Y = \frac{K-1}{K}\frac{\sum_{i=1}^{K-1}e^{r(y_i)}}{K-1}$ when $y_1, \cdots, y_{K-1}\overset{\rm{i.i.d.}}{\sim}p_t$.

For $a, b> 0$, the function $x \mapsto\frac{a}{b+x}$ is convex on $\sR_+^*$. Hence by Jensen inequality, for any $x$:

\begin{equation*}
    H_{p_t}^K(x) = \E_Y\left[\frac{e^{r(x)}}{\frac{e^{r(x)}}{K}+Y}\right]\geq \frac{e^{r(x)}}{\frac{e^{r(x)}}{K}+\E[Y]}=\frac{e^{r(x)}}{\frac{e^{r(x)}}{K}+\frac{K-1}{K}\E_{p_t}\left[e^{r(x)}\right]}
\end{equation*}

Finally, we can write:

\begin{align*}
    \E_{p_{t+1}}\left[e^{r(x)}\right]&=\int e^{r(x)} p_{t}(x)H_{p_t}^K(x)dx\\
    &\geq \int p_t(x) \frac{e^{2r(x)}}{\frac{e^{r(x)}}{K}+\frac{K-1}{K}\E_{y\sim p_t}\left[e^{r(y)}\right]}dx\\
    &\geq \frac{\E_{x\sim p_t}\left[e^{r(x)}\right]^2}{\frac{\E_{x\sim p_t}\left[e^{r(x)}\right]}{K}+ \frac{K-1}{K}\E_{y \sim p_t}\left[e^{r(y)}\right]}\\
    &=\E_{x\sim p_t}\left[e^{r(x)}\right]
\end{align*}
where we have used again Jensen inequality on the convex function $\frac{x^2}{\frac{x}{K}+c}$ on $\sR_+^*$ where
\begin{equation*}
    c:=\frac{K-1}{K}\E_{y \sim p_t}\left[e^{r(y)}\right]>0
\end{equation*}

\end{proof}

\subsubsection{Proof of \Cref{lem:reward_increasing_better}}
\label{sec:proof_reward_increasing_better}
\expectationincreasingbetter*

\begin{proof}
    By symmetry, we can write:
    \begin{align*}
        K\E_{p_{t+1}}\left[e^{r(x)}\right]& = \int_{x_1,\cdots, x_K}K\frac{e^{2r(x_1)}+\cdots + e^{2r(x_K)}}{e^{r(x_1)}+\cdots + e^{r(x_K)}}\prod_{k=1}^Kp_t(x_k)dx_k\\
        &=\int_{x_1, \dots, x_K}\sum_{j=1}^K\left[e^{r(x_j)}\frac{e^{r(x_1)}+\cdots + e^{r(x_K)}}{e^{r(x_1)}+\cdots + e^{r(x_K)}}+e^{r(x_j)}\frac{(K-1)e^{r(x_j)}-\sum_{i\neq j}e^{r(x_i)}}{e^{r(x_1)}+\cdots + e^{r(x_K)}}\right]\\
        &\qquad \qquad \qquad \qquad \prod_{k=1}^Kp_t(x_k)dx_k\\
        &=K\E_{p_t}\left[e^{r(x)}\right] + \int_{x_1,\dots, x_K}\frac{\sum_{i<j}\left(e^{r(x_i)}-e^{r(x_j)}\right)^2}{e^{r(x_1)}+\cdots + e^{r(x_K)}}\prod_{k=1}^Kp_t(x_k)dx_k\\
        &\leq K\E_{p_t}\left[e^{r(x)}\right] + \sum_{i<j}\frac{2\Var_{p_t}\left[e^{r(x)}\right]}{Ke^{r_*}}\\
        &\leq K\E_{p_t}\left[e^{r(x)}\right] + \frac{K(K-1)}{2}\frac{2\Var_{p_t}\left[e^{r(x)}\right]}{Ke^{r_*}}\\
        &\leq K\E_{p_t}\left[e^{r(x)}\right] + \frac{(K-1)\Var_{p_t}\left[e^{r(x)}\right]}{e^{r_*}}
    \end{align*}
    This brings finally,

\begin{equation*}
    \E_{p_{t+1}}\left[e^{r(x)}\right] \geq \E_{p_{t+1}}\left[e^{r(x)}\right] + \frac{K-1}{K}\frac{\Var_{p_t}\left[e^{r(x)}\right]}{e^{r_*}}
\end{equation*}

We now prove that the expected reward converges and we will first show the following lemma:

\begin{mdframed}[style=MyFrame2]
\begin{restatable}{lem}{probabilityincreases}$\forall \epsilon \geq 0, \forall t \geq 0,$
\begin{equation}
    \sP_{t+1}(r(x)\geq r_*-\epsilon)\geq \sP_{t}(r(x)\geq r_*-\epsilon)
\end{equation}
\label{lem:probabilityincreases}
\end{restatable}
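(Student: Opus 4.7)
My proof would recast the claim as a correlation inequality. Let $A_\epsilon := \{x : r(x) \geq r_* - \epsilon\}$. From the identity $p_{t+1}(x) = p_t(x) \cdot H_{p_t}^K(x)$ established in \Cref{eq_hpt}, one has
\begin{align*}
    \sP_{t+1}(A_\epsilon) - \sP_t(A_\epsilon) = \E_{x \sim p_t}\bigl[(H_{p_t}^K(x) - 1)\,\mathds{1}_{r(x) \geq r_* - \epsilon}\bigr],
\end{align*}
and, since $p_{t+1}$ integrates to one, $\E_{x \sim p_t}[H_{p_t}^K(x)] = 1$. So it suffices to show that, under $p_t$, the random variables $H_{p_t}^K(x)$ and $\mathds{1}_{r(x) \geq r_* - \epsilon}$ are positively correlated.

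The first key step is to observe that $H_{p_t}^K(x)$ depends on $x$ only through $r(x)$ and is a \emph{nondecreasing} function of $r(x)$. Indeed, writing $u = e^{r(x)}$ and $S = \sum_{i=1}^{K-1} e^{r(x_i)}$, the integrand $Ku/(u+S)$ appearing in \Cref{eq_hpt} has derivative $KS/(u+S)^2 \geq 0$ in $u$, so its expectation over $(x_1,\dots,x_{K-1}) \sim p_t^{\otimes(K-1)}$ is nondecreasing in $u$, hence in $r(x)$. The indicator $\mathds{1}_{r(x)\geq r_*-\epsilon}$ is trivially nondecreasing in $r(x)$ too.

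The second step is to invoke Chebyshev's association (sum) inequality: if $f$ and $g$ are both nondecreasing measurable functions of a single real-valued random variable $Y$, then $\E[f(Y)g(Y)] \geq \E[f(Y)]\,\E[g(Y)]$ whenever the expectations exist. Applying this with $Y = r(x)$ under $p_t$, $f = H_{p_t}^K$ (viewed as a function of $r(x)$), and $g = \mathds{1}_{\cdot \geq r_*-\epsilon}$ yields
\begin{align*}
    \sP_{t+1}(A_\epsilon) \;=\; \E_{p_t}\bigl[H_{p_t}^K(x)\,\mathds{1}_{r(x)\geq r_*-\epsilon}\bigr] \;\geq\; \E_{p_t}[H_{p_t}^K]\cdot \sP_t(A_\epsilon) \;=\; \sP_t(A_\epsilon),
\end{align*}
which is the desired inequality for every $\epsilon \geq 0$ and every $t \geq 0$.

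The only subtlety (and the main place one might slip) is the factorization of $H_{p_t}^K$ through $r$: Chebyshev's association requires both functions to be monotone in a \emph{common} real variable, and this is precisely what the explicit form \Cref{eq_hpt} grants us. A self-contained alternative, if one prefers to avoid citing the association inequality, is to prove the rearrangement directly: write the difference as a double integral over $(x,y) \sim p_t \otimes p_t$ of $\tfrac{1}{2}(H_{p_t}^K(x) - H_{p_t}^K(y))(\mathds{1}_{A_\epsilon}(x) - \mathds{1}_{A_\epsilon}(y))$ and note that each factor has the same sign since both $H_{p_t}^K$ and $\mathds{1}_{A_\epsilon}$ are monotone in $r$. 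Either route gives the result cleanly, with no extra assumption beyond \Cref{ass:rewardbounded}, which is already implicit from the context of the surrounding statements.
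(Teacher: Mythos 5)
Your proof is correct, but it takes a genuinely different route from the paper's. The paper never passes to the marginal multiplier $H^K_{p_t}$: it writes both probabilities as expectations over the $K$-tuple $(x_1,\dots,x_K)\sim p_t^{\otimes K}$, namely $\sP_t(\gB_\epsilon)=\tfrac1K\E\bigl[\sum_i \mathds{1}_{x_i\in\gB_\epsilon}\bigr]$ and $\sP_{t+1}(\gB_\epsilon)=\E\bigl[\sum_i \mathds{1}_{x_i\in\gB_\epsilon}\,e^{r(x_i)}/\sum_k e^{r(x_k)}\bigr]$, and then proves the inequality \emph{pointwise, conditional on the samples}: if $k$ of the $K$ draws fall in $\gB_\epsilon$, their total Bradley--Terry selection probability is $\tfrac{k\mu_1}{k\mu_1+(K-k)\mu_2}\ge\tfrac kK$, since the in-set mean $\mu_1$ of $e^{r(\cdot)}$ dominates the out-of-set mean $\mu_2$. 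You instead work with the density identity $p_{t+1}=p_t\cdot H^K_{p_t}$ from \Cref{eq_hpt}, observe that $H^K_{p_t}$ factors through $r(x)$ and is nondecreasing in it, and close with Chebyshev's association inequality (equivalently, the symmetrized double integral you sketch). Both arguments exploit the same underlying monotonicity; yours is more modular and reuses a fact the paper itself needs later (the monotonicity of $H^K_{p_t}$ in $r$ reappears in the proof of \Cref{lem:probability_increases_better}), while the paper's is more elementary and self-contained, reducing to a two-line algebraic comparison. A minor remark: neither proof actually requires \Cref{ass:rewardbounded} --- since $0<H^K_{p_t}\le K$ and the indicator is bounded, all the expectations involved exist unconditionally.
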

\end{mdframed}

\begin{proof}
    Consider $(x_1, \dots, x_K)\overset{i.i.d.}{\sim}p_t$ and denote $\gB_\epsilon:= \{x, r(x)\geq r_*-\epsilon\}$. Then,

    \begin{equation*}
        \sP_{t}(r(x)\geq r_*-\epsilon)= \frac{1}{K}\E_{x_1, \dots, x_K}\left[\sum_{i=1}^K \mathds{1}_{x_i\in \gB_\epsilon}\right]\enspace.
    \end{equation*}
    On the other hand,

    \begin{equation*}
        \sP_{t+1}(r(x)\geq r_*-\epsilon) = \E_{x_1, \dots, x_K}\left[\sum_{i=1}^K \mathds{1}_{x_i\in \gB_\epsilon}\frac{e^{r(x_i)}}{\sum_{k=1}^Ke^{r(x_k)}}\right]
    \end{equation*}

    Proving \Cref{lem:probabilityincreases} is then equivalent, by permutation symmetries to showing that $\forall k\leq K$, if $r(x_1), \dots, r(x_k)\geq r_*-\epsilon$ and $r(x_{k+1}), \dots, r(x_K)<r_*-\epsilon$, then $\frac{k}{K}\leq \sum_{i=1}^k \frac{e^{r(x_i)}}{\sum_{k=1}^Ke^{r(x_k)}}$.

    We can then write:
    \begin{align*}
        \sum_{i=1}^k \frac{e^{r(x_i)}}{\sum_{k=1}^Ke^{r(x_k)}}& = \frac{\sum_{i=1}^k e^{r(x_i)}}{\sum_{k=1}^Ke^{r(x_k)}} \\
        &=\frac{k\mu_1}{k\mu_1+(K-k)\mu_2}\\
        &\geq \frac{k}{K}
    \end{align*}
    Where $\mu_2:=\frac{\sum_{i=k+1}^Ke^{r(x_i)}}{K-k}\leq \frac{\sum_{i=1}^ke^{r(x_i)}}{k}=:\mu_1$
\end{proof}

    Let $\epsilon > 0$. By assumption on $r_*$(\Cref{ass:rewardbounded}), we know that there exists $\delta > 0$ such that $\sP_0(r(x)\geq r_*-\epsilon)\geq \delta$ and hence using \Cref{lem:probabilityincreases}, $\forall t\geq 0, \sP_t(r(x)\geq r_*-\epsilon)\geq \delta$. Therefore, while $\E_{p_{t}}\left[e^{r(x)}\right]\leq e^{r_*}-2\epsilon$, we know that
    \begin{equation*}
        \Var_{p_t}\left[e^{r(x)}\right]\geq \epsilon^2\sP_t(r(x)\geq r_*-\epsilon)\geq \epsilon^2\delta\enspace.
    \end{equation*}
    Therefore, while $\E_{p_{t}}\left[e^{r(x)}\right]\leq e^{r_*}-2\epsilon$, we have using \Cref{lem:reward_increasing_better} that
    \begin{equation*}
E_{p_{t+1}}\left[e^{r(x)}\right] \geq \E_{p_{t+1}}\left[e^{r(x)}\right] + \frac{K-1}{K}\frac{\epsilon^2\delta}{e^{r_*}}\enspace.
    \end{equation*}
    Since $\frac{K-1}{K}\frac{\epsilon^2\delta}{e^{r_*}}>0$, this can happen for only a finite number of steps and hence we know that there exists a time $T_\epsilon\geq 0$ such that (remind that the expectation of the reward is increasing by \Cref{lem:reward_increasing_better}):
    \begin{equation*}
        \forall t\geq T_\epsilon, \quad \E_{p_{t}}\left[e^{r(x)}\right]>e^{r_*}-2\epsilon\enspace.
    \end{equation*}
Since, the expected reward is obviously recursively bounded by $e^{r_*}$ at any iteration $t$, we just have proved that it converges.

We now prove that the variance has finite sum. Indeed, just notice that using \Cref{lem:reward_increasing_better} that $\forall T \geq 0$:
    \begin{align*}
        \sum_{t=0}^T\Var_{p_t}\left[e^{r(x)}\right] &\leq e^{r_*}\frac{K}{K-1} \left(\E_{p_{T+1}}\left[e^{r(x)}\right]-\E_{p_0}\left[e^{r(x)}\right]\right) \\
    &\leq \frac{K}{K-1}e^{2r_*}\enspace.
    \end{align*}
    This proves that $\sum_{t=0}^T\Var_{p_t}\left[e^{r(x)}\right]<\infty$. Especially since the reward variance has finite sum and is positive, it converges to $0$.
\end{proof}

\subsubsection{Fixed points of the retraining loop with filtering}

\begin{mdframed}[style=MyFrame2]
\begin{restatable}{lem}{fixedpoints}

A probability density $p$ is a fixed point of \Cref{eq:transfer_function} if and only if it puts all its mass on a single level set of the reward function. In other words, there exists $r_* \in \sR$ such that $\sP(r(x)=r_*) = 1$.
\label{lem:fixed_point}
\end{restatable}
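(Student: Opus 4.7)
The plan is to characterize the fixed points by re-using the symmetrization identity that underlies the proof of \Cref{lem:reward_increasing_better}. Recall that the curation update reads $p_{t+1}(x) = p_t(x) \cdot H_{p_t}^K(x)$, so a density $p$ is a fixed point if and only if $H_p^K(x) = 1$ for $p$-almost every $x$.

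For the ``if'' direction, suppose $p$ puts all its mass on a single level set $\{x : r(x) = r_*\}$. Then for $p$-almost every $x$ and $p^{\otimes(K-1)}$-almost every tuple $(x_1,\dots,x_{K-1})$, every exponential reward appearing in \Cref{eq:transfer_function} equals $e^{r_*}$, and the integrand collapses to $K e^{r_*}/(K e^{r_*}) = 1$. Hence $H_p^K \equiv 1$ on the support of $p$, which shows that $p$ is a fixed point.

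For the converse, I would invoke the symmetrization identity derived in the proof of \Cref{lem:reward_increasing_better}:
\begin{equation*}
K \E_{p_{t+1}}\left[e^{r(x)}\right] = K \E_{p}\left[e^{r(x)}\right] + \int \frac{\sum_{i<j}\left(e^{r(x_i)} - e^{r(x_j)}\right)^2}{\sum_{k=1}^K e^{r(x_k)}} \prod_{k=1}^K p(x_k)\,dx_k.
\end{equation*}
If $p$ is a fixed point, the two $\E[e^{r(x)}]$ terms coincide and the non-negative integral on the right vanishes. Since the integrand is a sum of non-negative summands divided by a positive quantity, each term $(e^{r(x_i)} - e^{r(x_j)})^2$ must vanish $p^{\otimes K}$-almost surely. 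In particular, taking $(i,j)=(1,2)$ and using Fubini, one obtains $\Var_p[e^{r(x)}] = 0$, so $e^{r(x)}$ is $p$-almost surely equal to some constant $e^{r_*}$, and thus $\sP(r(x) = r_*) = 1$.

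The main subtlety I expect is the implicit requirement $\E_p[e^{r(x)}] < \infty$ needed for the symmetrization identity to be well-posed. If this finiteness fails, the fixed-point equation $H_p^K \equiv 1$ on the support of $p$ remains meaningful, but one should bypass the identity and argue directly from that pointwise equation, combined with a Jensen-type estimate on the convex map $u \mapsto K e^{r(x)}/(e^{r(x)} + u)$, to reach the same conclusion that $r$ must be $p$-a.s. constant.
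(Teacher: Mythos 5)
Your proof is correct and follows essentially the same route as the paper's: both directions rest on the symmetrization identity from the proof of \Cref{lem:reward_increasing_better}, with the ``only if'' direction forcing the nonnegative variance-type integral to vanish (the paper phrases this as a contrapositive, exhibiting two disjoint reward intervals of positive mass when $r$ is not $p$-a.s.\ constant, which also sidesteps the $\E_p[e^{r(x)}]=\infty$ subtlety you flag). No substantive difference.
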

\end{mdframed}

\begin{proof}
    Given the density $p$, denote $\sP$ the corresponding probability function and $\gF^K(p)$ the curated distribution using \Cref{eq:sampling,eq:BT}. When the reward $r$ is $p$-a.s. bounded, this is a direct consequence of \Cref{lem:reward_increasing_better}. When this is not the case, we know the existence of two disjoint interval $I, J\subset \sR$ such that $\sP(r(x)\in I)>0$ and $\sP(r(x)\in J)>0$. From the proof of \Cref{lem:reward_increasing_better}, we have seen that, taking $p_t:=p$:
    \begin{align*}
K\E_{\gF^K(p)}\left[e^{r(x)}\right]& = K\E_{p}\left[e^{r(x)}\right] + \int_{x_1,\dots, x_K}\frac{\sum_{i<j}\left(e^{r(x_i)}-e^{r(x_j)}\right)^2}{e^{r(x_1)}+\cdots + e^{r(x_K)}}\prod_{k=1}^Kp(x_k)dx_k\\       &>K\E_{p}\left[e^{r(x)}\right]
    \end{align*}
    using that $I, J$ have strictly positive mass and disjoint rewards. Therefore, $p$ cannot be a fixed point.

    Conversely, if $p$ puts mass on a single level set of $r$, it is straightforward that it is a fixed point of the filtering operator because $H_p^K(x)$ is almost surely constant.
\end{proof}

\subsubsection{Proof of \Cref{lem:klconvergence}}

\klconvergence*

\begin{proof}
    Recall $p_*(x) = \frac{p_0(x)\mathds{1}_{r(x)=r_*}}{\sP_0(r(x)=r_*)}$. Furthermore, notice that for any $t\geq 0$,
    \begin{equation*}
        p_{t+1}(x)\mathds{1}_{r(x)=r_*} \propto  p_0(x)\mathds{1}_{r(x)=r_*}
    \end{equation*} by recursion because $H_{p_t}^K(x)$ depends only on $r(x)$.
    From that we deduce:
    \begin{equation*}
        \KL(p_*||p_t) = -\log(\sP_t(r(x)=r_*)).
    \end{equation*} We therefore only have to show that $\sP_t(r(x)=r_*)\xrightarrow{t\rightarrow \infty} 1$.

We will first show the following lemma:

\begin{mdframed}[style=MyFrame2]
\begin{restatable}{lem}{probabilityincreasesbetter}
$\forall \epsilon \geq 0, \forall t \geq 0,$
\begin{align*}
    \sP_{t+1}(r(x)=r_*)-\sP_{t}(r(x)=r_*)\geq \sP_{0}(r(x)=r_*)*(\sP_{t+1}(r(x)\geq r_*-\epsilon)-\sP_{t}(r(x)\geq r_*-\epsilon))
\end{align*}
\label{lem:probability_increases_better}
\end{restatable}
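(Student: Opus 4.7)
My plan is to exploit the explicit transfer formula in \Cref{eq_hpt} together with the monotonicity result of \Cref{lem:probabilityincreases}. The key structural observation I would use is that $H^K_{p_t}(x)$ depends on $x$ only through the scalar $r(x)$ and, for fixed $p_t$, is nondecreasing in $r(x)$; this last point follows from $\tfrac{d}{du}\tfrac{Ku}{u+S}=\tfrac{KS}{(u+S)^2}\geq 0$ applied inside the expectation of \Cref{eq_hpt}. A short induction on $p_{t+1}=p_t\cdot H^K_{p_t}$ combined with \Cref{ass:rewardbounded}(a) then shows that $p_t$ remains supported on $\{r\leq r_*\}$ for all $t$, so in particular $\{r(x)=r_*\}$ and $\{r(x)\geq r_*\}$ agree $p_t$-a.s.

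Next I would set $A:=\{x:r(x)=r_*\}$, $B_\epsilon:=\{x:r(x)\geq r_*-\epsilon\}$, and define $h_t^*:=H^K_{p_t}(x)$ for any $x\in A$ (a well-defined constant on $A$ by the first structural fact). Because every sample drawn from $p_t$ satisfies $r\leq r_*$, evaluating the integrand of \Cref{eq_hpt} at $r(x)=r_*$ gives $h_t^*\geq 1$; and by monotonicity $H^K_{p_t}(x)\leq h_t^*$ holds $p_t$-a.s.

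With these two facts in hand, the two estimates I need are immediate. On the one hand, since $H^K_{p_t}\equiv h_t^*$ on $A$,
\begin{equation*}
\sP_{t+1}(A)-\sP_t(A)=\int_A p_t(x)\bigl(H^K_{p_t}(x)-1\bigr)\,dx=(h_t^*-1)\sP_t(A).
\end{equation*}
On the other hand, using $H^K_{p_t}\leq h_t^*$ on $B_\epsilon$,
\begin{equation*}
\sP_{t+1}(B_\epsilon)-\sP_t(B_\epsilon)=\int_{B_\epsilon} p_t(x)\bigl(H^K_{p_t}(x)-1\bigr)\,dx\leq (h_t^*-1)\sP_t(B_\epsilon)\leq h_t^*-1.
\end{equation*}

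Finally, applying \Cref{lem:probabilityincreases} with $\epsilon=0$ (using $\{r\geq r_*\}=A$ up to a $p_t$-null set) yields $\sP_t(A)\geq \sP_0(A)$ by iteration. Chaining the three inequalities gives
\begin{equation*}
\sP_{t+1}(A)-\sP_t(A)=(h_t^*-1)\sP_t(A)\geq (h_t^*-1)\sP_0(A)\geq \sP_0(A)\bigl(\sP_{t+1}(B_\epsilon)-\sP_t(B_\epsilon)\bigr),
\end{equation*}
which is exactly the claim. The only step that requires any care is identifying that $A$ is the level set on which $H^K_{p_t}$ attains its essential supremum over $\operatorname{supp} p_t$; once that is in place, everything reduces to elementary bookkeeping and a single application of \Cref{lem:probabilityincreases}, so I do not expect a serious obstacle.
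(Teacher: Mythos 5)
Your proposal is correct and follows essentially the same route as the paper's proof: both arguments write the increments $\sP_{t+1}(\cdot)-\sP_t(\cdot)$ as integrals of $p_t(x)(H^K_{p_t}(x)-1)$, use the monotonicity of $H^K_{p_t}$ in $r(x)$ to bound these via the (constant) value of $H^K_{p_t}$ on the top level set, and then invoke \Cref{lem:probabilityincreases} to replace $\sP_t(r(x)=r_*)$ by $\sP_0(r(x)=r_*)$. The only cosmetic differences are that you verify $h_t^*\geq 1$ directly from the integrand (using that $p_t$ is supported on $\{r\leq r_*\}$), whereas the paper deduces the analogous $C-1\geq 0$ from the nonnegativity of the $B_\epsilon$-increment given by \Cref{lem:probabilityincreases}.
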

\end{mdframed}

\begin{proof} We will actually show: \begin{equation}
    \sP_{t+1}(r(x)=r_*)-\sP_{t}(r(x)=r_*)\geq \sP_{t}(r(x)=r_*)*(\sP_{t+1}(r(x)\geq r_*-\epsilon)-\sP_{t}(r(x)\geq r_*-\epsilon))
    \label{eq:probability_increases_better}
\end{equation} from what we directly deduce \Cref{lem:probability_increases_better} by using \Cref{lem:probabilityincreases}.

To prove \Cref{eq:probability_increases_better}, just notice that for any $x, y$, if $r(x)\geq r(y)$ then $H_{p_t}^K(x)\geq H_{p_t}^K(y)$ by increasing monotonicity of $z \mapsto \tfrac{z}{z+c}$ on $\sR_+^*$ for a positive constant $c>0$. Therefore we know the existence of a constant $C$ such that $\forall x, y$, if $r(x)=r_*$ and $r(y)\leq r_*$, then $H_{p_t}^K(x)\geq C\geq H_{p_t}^K(y)$. For example, take $C = \inf_{x \text{ s.t. } r(x)=r_*}H_{p_t}^K(x)$.
Then we can write:

\begin{align*}
    \sP_{t+1}(r(x)=r_*)-\sP_{t}(r(x)=r_*)& = \int \mathds{1}_{r(x)=r_*}(p_{t+1}(x)-p_t(x))dx\\
    &=\int \mathds{1}_{r(x)=r_*}p_{t}(x)(H_{p_t}^K(x)-1)dx\\
    &\geq \int \mathds{1}_{r(x)=r_*}p_{t}(x)(C-1)dx\\
    &=  \sP_{t}(r(x)=r_*)(C-1)
\end{align*}

and:

\begin{align*}
    \sP_{t+1}(r(x)\geq r_*-\epsilon)-\sP_{t}(r(x)\geq r_*-\epsilon)& = \int \mathds{1}_{r(x)\geq r_*-\epsilon}(p_{t+1}(x)-p_t(x))dx\\
    &=\int \mathds{1}_{r(x)\geq r_*-\epsilon}(H_{p_t}^K(x)-1)dx\\
    &\leq \int \mathds{1}_{r(x)\geq r_*-\epsilon}p_{t}(x)(C-1)dx\\
    &=  \sP_{t}(r(x)\geq r_*-\epsilon)(C-1)\\
    &\leq (C-1)
\end{align*}
where in the last step we have used $C-1\geq 0$ because $\sP_{t+1}(r(x)\geq r_*-\epsilon)-\sP_{t}(r(x)\geq r_*-\epsilon)\geq 0$ by \Cref{lem:probabilityincreases} and $\sP_{t}(r(x)\geq r_*-\epsilon)\leq 1$.

Combining the last two equations we get:

\begin{equation*}
    \sP_{t+1}(r(x)=r_*)-\sP_{t}(r(x)=r_*)\geq \sP_{t}(r(x)=r_*)*(\sP_{t+1}(r(x)\geq r_*-\epsilon)-\sP_{t}(r(x)\geq r_*-\epsilon))
\end{equation*}
\end{proof}

We can now prove $\sP_{t}(r(x)=r_*)\xrightarrow{t\rightarrow \infty} 1$. Let $\delta > 0$, suppose that at time $t$, \begin{equation*}
    \sP_{t}(r(x)=r_*)\leq 1-\delta\enspace.
\end{equation*}
Denote for $\epsilon > 0$, $\gU_\epsilon = \{x\in \sR^d| r_*> r(x)\geq r_*-\epsilon\}$. We know that $\bigcap_{\epsilon> 0}\gU_\epsilon=\varnothing$.
Therefore, $\exists \epsilon^t> 0$ such that $\sP_t(\gU_{\epsilon^t})\leq \frac{\delta}{4}$. Furthermore, for any $t'\geq t$, we know that \begin{equation}
    \sP_{t'}(r(x)\leq r_*-\epsilon^t)\xrightarrow{t'\rightarrow \infty}1
\end{equation} by convergence of the expectation (\Cref{lem:reward_increasing_better}) and Markov property. We therefore know that $\exists t'\geq t$ such that $\sP_{t'}(r(x)\leq r_*-\epsilon^t)\geq 1-\frac{\delta}{2}$. 

By using the preceding \Cref{lem:probability_increases_better}, we get:

\begin{align*}
    \sP_{t'}(r(x)=r_*)-\sP_{t}(r(x)=r_*)&\geq p_0\cdot(\sP_{t'}(r(x)\geq r_*-\epsilon^t)-\sP_{t}(r(x)\geq r_*-\epsilon^t))\\
    &\geq \sP_{0}(r(x)=r_*)\cdot((1-\frac{\delta}{2})-(1-\delta + \frac{\delta}{4}))\\
    &\geq \sP_{0}(r(x)=r_*)\cdot\tfrac{\delta}{4}
\end{align*}
and $\sP_{t}(r(x)=r_*)$ hence increases by at least $\tfrac{\delta}{4}$. Therefore, the condition $\sP_{t}(r(x)=r_*)\leq 1-\delta$ must become invalid at some point. Since we have shown this for any $\delta > 0$, this shows that $\sP_{t}(r(x)=r_*)\rightarrow 1$.

\end{proof}

\subsubsection{Proof of \Cref{lem:reward_increases_mixture}}

\improvement*

\begin{proof}
    We proceed by recursion. First, we know that $\forall t \geq 1, \Var_{p_t}\left[e^{r(x)}\right]\geq \left(\frac{1}{1+\lambda}\right)^2\Var_{\prefer}\left[e^{r(x)}\right]$. Furthermore it is straightforward using \Cref{lem:reward_increasing} and a recursion that $\forall t\geq 0, \E_{p_t}\left[e^{r(x)}\right]\geq \E_{\prefer}\left[e^{r(x)}\right]$.

    This brings that

    $\forall t\geq 1, \E_{p_{t}}\left[e^{r(x)}\right]\geq \frac{1}{1+\lambda}\E_{\prefer}\left[e^{r(x)}\right] + \frac{\lambda}{1+\lambda}\E_{\prefer}\left[e^{r(x)}\right]+\frac{\lambda}{(1+\lambda)^3}\frac{(K-1)\Var_{\prefer}\left[e^{r(x)}\right]}{Ke^{r_*}} $
    which brings the result.

\end{proof}

We can actually show the following lower bound on the limit:

\begin{mdframed}[style=MyFrame2]
\begin{restatable}{lem}{limitinf}
Consider the process $p_{t+1}(x) = \frac{1}{1+\lambda}\prefer(x) + \frac{\lambda}{1+\lambda}p_{t}(x)\cdot H^K_{p_t}(x)$ with $p_0 = \prefer$. Then,
\begin{equation*}
   \underset{t\rightarrow\infty}{\liminf}\E_{p_{t}}\left[e^{r(x)}\right] \geq \E_{\prefer}\left[e^{r(x)}\right] + \frac{\lambda}{(1+\lambda)^2}\frac{(K-1)\Var_{\prefer}\left[e^{r(x)}\right]}{Ke^{r_*}} \enspace.
\end{equation*}
\label{lem:limitinf}
\end{restatable}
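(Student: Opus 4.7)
The plan is to reduce the statement to a linear one-step recursion $m_{t+1} \geq A + B m_t + C$ on $m_t := \E_{p_t}[e^{r(x)}]$ with contraction constant $B = \lambda/(1+\lambda) < 1$, then iterate it. The constants $A, C$ will respectively encode the re-injection of the reference reward and the irreducible curation gain per step, and the asymptotic fixed point $(A+C)/(1-B)$ will yield exactly the advertised bound.

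I would begin by unfolding the mixture structure in \Cref{eq_recursion_mix}: since $p_{t+1}$ is a mixture of $\prefer$ and the curated density $q_t(x) := p_t(x) H^K_{p_t}(x)$ with respective weights $1/(1+\lambda)$ and $\lambda/(1+\lambda)$, linearity gives $m_{t+1} = \tfrac{m_0}{1+\lambda} + \tfrac{\lambda}{1+\lambda}\E_{q_t}[e^{r(x)}]$. Applying \Cref{lem:reward_increasing_better} to $p_t$---which by an easy induction inherits the almost-sure bound $r_*$ from $p_0 = \prefer$, since each $q_t$ shares the support of $p_t$---yields $\E_{q_t}[e^{r(x)}] \geq m_t + \tfrac{K-1}{K}\cdot \tfrac{\Var_{p_t}[e^{r(x)}]}{e^{r_*}}$.

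Next I would prove a uniform-in-$t$ floor on $\Var_{p_t}[e^{r(x)}]$. Since $p_t$ always contains at least a $1/(1+\lambda)$-fraction of $\prefer$, the standard mixture-variance identity $\Var_{\alpha p_1 + (1-\alpha)p_2}[X] = \alpha\Var_{p_1}[X] + (1-\alpha)\Var_{p_2}[X] + \alpha(1-\alpha)(\E_{p_1}[X]-\E_{p_2}[X])^2$ gives $\Var_{p_t}[e^{r(x)}] \geq \tfrac{1}{(1+\lambda)^2}\Var_{\prefer}[e^{r(x)}]$ for every $t\geq 1$---the same estimate already used in the proof of \Cref{lem:reward_increases_mixture}. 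Substituting this produces the linear recursion $m_{t+1} \geq \tfrac{m_0}{1+\lambda} + \tfrac{\lambda}{1+\lambda}m_t + \tfrac{\lambda(K-1)v_*}{K(1+\lambda)^3 e^{r_*}}$ with $v_* := \Var_{\prefer}[e^{r(x)}]$. Setting $A = \tfrac{m_0}{1+\lambda}$, $B = \tfrac{\lambda}{1+\lambda}$, $C = \tfrac{\lambda(K-1)v_*}{K(1+\lambda)^3 e^{r_*}}$, a routine telescoping gives $m_{t+1} \geq B^{t+1} m_0 + (A+C)\tfrac{1-B^{t+1}}{1-B}$. Since $B<1$, letting $t\to\infty$ and using $1-B = 1/(1+\lambda)$ yields $\liminf_t m_t \geq (1+\lambda)(A+C) = m_0 + \tfrac{\lambda(K-1)v_*}{K(1+\lambda)^2 e^{r_*}}$.

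The main obstacle I anticipate is the uniform variance bound in the second step: a bare application of \Cref{lem:reward_increasing_better} only buys a per-step improvement proportional to $\Var_{p_t}[e^{r(x)}]$, which could in principle shrink to zero along the trajectory and collapse the asymptotic gain to the trivial bound $\liminf_t m_t \geq m_0$. The continual re-injection of $\prefer$-mass at each retraining step is exactly what keeps $\Var_{p_t}[e^{r(x)}]$ bounded below by a $t$-independent positive constant, thereby promoting the one-shot gain of \Cref{lem:reward_increases_mixture} into a permanent asymptotic improvement.
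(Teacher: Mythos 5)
Your proof is correct and follows essentially the same route as the paper's: a one-step linear recursion on $\E_{p_t}[e^{r(x)}]$ with contraction factor $\lambda/(1+\lambda)$, a uniform variance floor $\Var_{p_t}[e^{r(x)}]\geq \tfrac{1}{(1+\lambda)^2}\Var_{\prefer}[e^{r(x)}]$ coming from the re-injected $\prefer$ mass, and a geometric summation to pass to the limit. Your treatment is in fact slightly more explicit than the paper's (which states the variance floor without the mixture-variance identity and writes the telescoped sum $A^tB+\dots+AB$ directly), but the argument and the resulting constants are identical.
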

\end{mdframed}

\begin{proof}
    Using the proof of \Cref{lem:reward_increases_mixture} we can show the following more precise lower bound at each step: denote $A:=\frac{\lambda}{1+\lambda}$ and $B=\frac{1}{(1+\lambda)^2}\frac{(K-1)\Var_{\prefer}\left[e^{r(x)}\right]}{Ke^{r_*}}$, then for all $t\geq 1$:
\begin{align*}
    \E_{p_{t}}\left[e^{r(x)}\right]\geq \E_{\prefer}\left[e^{r(x)}\right]+ A^tB+A^{t-1}B+\dots+AB\enspace.
\end{align*}
    This directly bring that :
\begin{align*}
   \underset{t\rightarrow\infty}{\liminf}\E_{p_{t}}\left[e^{r(x)}\right] &\geq \E_{\prefer}\left[e^{r(x)}\right] + AB\sum_{i=0}^\infty A\\
   &= \E_{\prefer}\left[e^{r(x)}\right] + \frac{AB}{1-A}\\
   &= \E_{\prefer}\left[e^{r(x)}\right] + \frac{\lambda}{1+\lambda}\frac{1}{(1+\lambda)^2}\frac{(K-1)\Var_{\prefer}\left[e^{r(x)}\right]}{Ke^{r_*}}\frac{1}{1-\frac{\lambda}{1+\lambda}}\\
   &= \E_{\prefer}\left[e^{r(x)}\right]+\frac{\lambda}{(1+\lambda)^2}\frac{(K-1)\Var_{\prefer}\left[e^{r(x)}\right]}{Ke^{r_*}}
\end{align*}
\end{proof}

\subsubsection{Proof of \Cref{lem:kl_upper_bound_mixture}}

\klupperbound*

\begin{proof}
We know that $\forall K\geq 2, \forall x\in \sR^d, H^K_{p_t}(x)\leq K$.

We can then show by recursion that $\forall t \geq 1, \forall x, \frac{p_t(x)}{\prefer(x)}\leq \frac{1}{1-\lambda(K-1)}$. Indeed, it is true at initialization and if true at time $t$, then at time $t+1$:

\begin{equation*}
    \frac{p_{t+1}(x)}{\prefer(x)} \leq \frac{1}{1+\lambda} + \frac{\lambda}{1+\lambda}\frac{1}{1-\lambda(K-1)}\cdot K\leq \frac{1}{1-\lambda(K-1)}
\end{equation*}

We then just replace this bound in the expression of the $\KL(p_t||\prefer)$:

\begin{align*}
    \KL(p_t||\prefer) = \E_{p_t}\left[\log(\frac{p_t(x)}{\prefer(x)}\right]\leq \log\left(\frac{1}{1-\lambda(K-1)}\right) \enspace .
\end{align*}

\end{proof}

\subsubsection{Additional lemma: retraining on a convex combination of previous iterations}

We study here the impact of retraining on a combination of all previous iterations and show that the process remains constant. This motivates and enlightens previous works that consider only retraining on the distribution at the last iteration. Let $\alpha_0, \alpha_1, \alpha_2 \dots$ a fixed non-negative sequence and consider a retraining process using maximum likelihood: $\theta_{t+1} = \argmax_\theta \sum_{i=0}^t\alpha_i\E_{p_{\theta_i}}\log(p_{\theta}(x))$. We will assume for this lemma that the solution of this optimization problem is unique. Otherwise the lemma remains valid but for a carefully chosen solution when there are multiple possibilities.

\begin{mdframed}[style=MyFrame2]
\begin{restatable}{lem}{convexcombination}
Suppose we start with the first $T$ iterations predefined, \ie by fixing $p_0, \cdots, p_{T-1}$. Then starting $t=T$, the learned distribution is constant, \ie $\forall t \geq T, p_t = p_T$.
\label{lem:convexcombination}
\end{restatable}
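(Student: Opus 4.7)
The plan is to reduce everything to Gibbs' inequality: for any distribution $q$, the map $\theta \mapsto \E_{q}[\log p_\theta(x)] = -H(q) - \KL(q||p_\theta)$ attains its maximum over $\mathcal{P}_\Theta$ at any $\theta$ with $p_\theta = q$. Setting $f_T(\theta) := \sum_{i=0}^{T-1} \alpha_i \E_{p_{\theta_i}}[\log p_\theta(x)]$, the retraining rule at step $T$ reads $\theta_T = \argmax_\theta f_T(\theta)$. I would then proceed by induction on $t \geq T$ to show that $\theta_t = \theta_T$.

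For the inductive step, assume $\theta_T = \theta_{T+1} = \cdots = \theta_t$, so that $p_{\theta_i} = p_{\theta_T}$ for every $T \leq i \leq t$. The objective at step $t+1$ then splits as
\[
\sum_{i=0}^{t} \alpha_i \E_{p_{\theta_i}}[\log p_\theta(x)] = f_T(\theta) + \Big(\sum_{i=T}^{t} \alpha_i\Big) \E_{p_{\theta_T}}[\log p_\theta(x)].
\]
By the definition of $\theta_T$, the first summand is maximized at $\theta_T$. By Gibbs' inequality applied to $q = p_{\theta_T} \in \mathcal{P}_\Theta$, the second summand is also maximized at $\theta_T$. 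Hence $\theta_T$ is a maximizer of the entire sum, and the uniqueness hypothesis stated in the lemma forces $\theta_{t+1} = \theta_T$, closing the induction.

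The only real obstacle is the uniqueness question: the argument combines two objectives whose maxima happen to coincide at $\theta_T$, and one can conclude that this common maximum is the unique maximum of the sum only thanks to the uniqueness assumption baked into the statement. Without it, $\theta_T$ would still be a maximizer but the retraining rule could in principle select a different tiebreaker, which is the caveat the lemma explicitly acknowledges. Apart from this, no further computation is required, since the whole claim reduces to the elementary fact that, within a parametric family, the maximum-likelihood fit to samples from $p_{\theta_T}$ is $p_{\theta_T}$ itself.
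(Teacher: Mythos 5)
Your proof is correct and follows essentially the same route as the paper's: both decompose the step-$(t+1)$ objective into the first $T$ (fixed) terms, maximized at $\theta_T$ by definition, plus the terms from $T$ to $t$, which by the induction hypothesis collapse to a multiple of $\E_{p_{\theta_T}}[\log p_\theta(x)]$ and are maximized at $\theta_T$ by cross-entropy minimization (Gibbs' inequality), with the uniqueness assumption closing the induction. Your explicit remark on the role of the uniqueness hypothesis matches the caveat the paper states just before the lemma.
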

\end{mdframed}

\xhdr{Discussion} As an example, suppose that we take $p_0 = \pdata$ and $p_1$ an initial generative model trained on $\pdata$. Then, \Cref{lem:convexcombination} states that starting $t=2$, the learned distribution at each step will be constant equal to $p_2$. In other words, we cannot expect the process to converge to a global maximizer of the data log-likelihood. More generally, \Cref{lem:convexcombination} shows that if the respective proportion of previous iterations remains constant throughout the retraining loop, the process remains constant and hence cannot converge towards the data distribution. These considerations have interesting links with previous work by \citet{gerstgrasser2024model} which experimentally showed that accumulating data with fixed relative ratios breaks the curse of recursion. However, note that the focus is different since they are in the finite sample setting while we study the infinite sample setting. Finally \Cref{lem:convexcombination} implies that to ensure convergence, we need to relatively decrease the proportion of previous iterations and comparatively increase the relative proportion of the data distribution or only use the distribution of the current iteration. This has been done in \cite{bertrand2023stability} for parametrized generative models under some assumptions

\begin{proof}
    We prove the result by recursion starting $t=T$. By definition:

    $$\theta_{T} =  \argmax_\theta \sum_{i=0}^{T-1}\alpha_i\sE_{p_{\theta_i}}\log(p_\theta(x))$$

    Then suppose that for all $j$ such that $T\leq j\leq t,\ \theta_j= \theta_T$.
Then we can write:
    $$\theta_{t+1} =  \argmax_\theta \sum_{i=0}^t\alpha_i\log(p_\theta(x))$$

    But we know by cross-entropy minimization that
    $$\theta_T = \argmax_\theta \sE_{p_{\theta_T}}\log(p_{\theta}(x)) =\argmax_\theta\sum_{i=T}^t\sE_{p_{\theta_i}}\log(p_{\theta}(x))\enspace.$$
    Furthermore, by definition,
    $$\theta_T = \argmax_\theta \sum_{i=0}^{T-1}\alpha_i\E_{p_{\theta_i}}\log(p_{\theta}(x))\enspace.$$
    In particular it maximizes the sum of both previous terms and hence $\theta_{t+1} = \theta_T$
\end{proof}

\subsubsection{Additional lemma of convergence in parameters}

\begin{mdframed}[style=MyFrame2]
\begin{restatable}{lem}{upperboundlambda}
$\forall \lambda\in  \sR_+$, if $\lambda<\frac{\alpha}{2L\epsilon}$, then for $\theta_0$ in a neighborhood of $\theta_*$, we have the following rate of convergence: \begin{equation}
    \|\theta_t-\theta_*\| = \Tilde{\gO}\left(\left(\frac{\lambda(\alpha+\epsilon L)}{\alpha+\lambda(\alpha-\epsilon L)}\right)^t\right)\enspace.
    \label{eq:rate_theta}
\end{equation}
\label{lem:upperboundlambda}
\end{restatable}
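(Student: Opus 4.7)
The plan is to derive a linear contraction $\theta_{t+1}-\theta_*\approx\lambda(A+\lambda B)^{-1}B\,(\theta_t-\theta_*)$ by first-order analysis of the MLE at each retraining step, bound its operator norm by the advertised rate via a careful perturbation decomposition, and conclude with a standard local Banach fixed-point argument.

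First, from the defining first-order optimality condition
\[
\tfrac{1}{1+\lambda}\,\E_{\pdata}\!\left[\nabla_\theta\log p_{\theta_{t+1}}(x)\right] + \tfrac{\lambda}{1+\lambda}\,\E_{p_{\theta_t}}\!\left[\nabla_\theta\log p_{\theta_{t+1}}(x)\right]=0,
\]
I would verify that $\theta_*$ is a fixed point: the first term vanishes by definition of $\theta_*$ and the second vanishes when $\theta_t=\theta_{t+1}=\theta_*$ by the Fisher identity $\E_{p_{\theta_*}}[\nabla_\theta\log p_{\theta_*}(x)]=0$. Linearizing each gradient around $\theta_*$ introduces two Hessians, $A:=\E_{\pdata}[\nabla^2_\theta\log p_{\theta_*}(x)]$ and $B:=\E_{p_{\theta_*}}[\nabla^2_\theta\log p_{\theta_*}(x)]$, plus the derivative with respect to the sampling parameter of the second term, which by the score-function trick together with the Bartlett identity collapses to $-B$. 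Collecting terms yields the linearized recursion $(A+\lambda B)(\theta_{t+1}-\theta_*)=\lambda B(\theta_t-\theta_*)$ up to $O(\|\theta_t-\theta_*\|^2)$.

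Next I would bound the spectral norm of the iteration matrix $M:=\lambda(A+\lambda B)^{-1}B$. Kantorovich-Rubinstein duality applied to the $L$-Lipschitz map $x\mapsto\nabla^2_\theta\log p_{\theta_*}(x)$ combined with the Wasserstein hypothesis gives $\|A-B\|_{op}\leq L\epsilon$. Writing $E:=B-A$ and factoring $(1+\lambda)A+\lambda E=A\bigl((1+\lambda)I+\lambda A^{-1}E\bigr)$, the bound $\|A^{-1}\|_{op}\leq 1/\alpha$ (from $-A\succeq\alpha I$) yields $\|\bigl((1+\lambda)I+\lambda A^{-1}E\bigr)^{-1}\|_{op}\leq\alpha/[\alpha+\lambda(\alpha-L\epsilon)]$, valid under $\lambda<\alpha/(2L\epsilon)$. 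Splitting
\[
\lambda(A+\lambda B)^{-1}B=\lambda\bigl((1+\lambda)A+\lambda E\bigr)^{-1}A+\lambda\bigl((1+\lambda)A+\lambda E\bigr)^{-1}E
\]
and controlling the two summands separately gives $\|M\|_{op}\leq\frac{\lambda\alpha}{\alpha+\lambda(\alpha-L\epsilon)}+\frac{\lambda L\epsilon}{\alpha+\lambda(\alpha-L\epsilon)}=\frac{\lambda(\alpha+L\epsilon)}{\alpha+\lambda(\alpha-L\epsilon)}$, which is strictly less than $1$ precisely when $\lambda<\alpha/(2L\epsilon)$.

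Finally, exponential convergence follows from a standard local-contraction argument: choose a neighborhood of $\theta_*$ small enough that the quadratic Taylor remainder is dominated by the linear term, so the implicit map $\theta_t\mapsto\theta_{t+1}$ is a strict contraction there with rate arbitrarily close to $\|M\|_{op}$; iterating gives the $\tilde{\mathcal{O}}$ rate, where logarithmic factors absorb the separation between the leading rate and the quadratic remainder. The main obstacle is the spectral bound in the previous step: because $A$ and $B$ need not commute and the hypothesis gives no a priori upper bound on $\|A\|_{op}$, naive submultiplicativity $\|(A+\lambda B)^{-1}B\|\leq\|(A+\lambda B)^{-1}\|\,\|B\|$ cannot be used directly; the factorization trick is the essential step that produces the denominator $\alpha+\lambda(\alpha-L\epsilon)$ using only the one-sided eigenvalue assumption on $-A$.
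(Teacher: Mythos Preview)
Your proposal is correct and follows essentially the same route as the paper. The paper cites \citet{bertrand2023stability} for the Jacobian formula $\mathcal{JG}(\theta_*)=(I+\lambda A^{-1}B)^{-1}\lambda A^{-1}B$ (which equals your $\lambda(A+\lambda B)^{-1}B$) and then derives the identical spectral bound via the same factorization $(1+\lambda)I+\lambda(A^{-1}B-I)$ together with the Neumann estimate; since $A^{-1}E=A^{-1}B-I$, your splitting $B=A+E$ reproduces term-for-term their product bound $\|(I+\lambda A^{-1}B)^{-1}\|\cdot\lambda\|A^{-1}B\|$, and the local-contraction wrap-up is the same.
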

\end{mdframed}

\begin{proof}
    We follow the same steps and notations as in \citet{bertrand2023stability}. The main idea is to get another bound on the operator norm of the Jacobian at $\theta_*$:
 $\|\mathcal{JG}(\theta^*)\|$ (their lemma E.1 (iii)).
We begin with their intermediate result (lemma E.1 (ii)):
$$
\mathcal{JG}(\theta^*) = (I+\lambda A^{-1}B)^{-1}\lambda A^{-1}B
$$
However we will bound this term differently.
First note that $\|B-A\|\leq L\epsilon$.

From this, we deduce by sub-multiplicativity of the matrix norm that:

$$
\|A^{-1}B-I\|\leq \|A^{-1}\|\|B-A\| \leq \frac{L\epsilon}{\alpha}
$$

and by triangular inequality:

$$\|A^{-1}B\|=\|A^{-1}(B-A)+I\|\leq \|A^{-1}\|\|B-A\|+1 \leq 1+\frac{L\epsilon}{\alpha}\enspace.$$

Now we use the triangular inequality again to write:

\begin{align*}
    \|\mathcal{JG}(\theta^*)\|&\leq \|(I+\lambda A^{-1}B)^{-1}\|\|\lambda A^{-1}B\|\enspace.
\end{align*}

But,

\begin{align*}
    \|(I+\lambda A^{-1}B)^{-1}\|& = \|((I+\lambda I) + \lambda ( A^{-1}B - I))^{-1}\|\\
    & = \frac{1}{1+\lambda}\|(I+\frac{\lambda}{1+\lambda}(A^{-1}B-I))^{-1}\|\\
    &\leq \frac{1}{1+\lambda}\frac{1}{1-\frac{\lambda}{1+\lambda}\|A^{-1}B-I\|}\\
    &\leq \frac{1}{1+\lambda-\lambda \frac{L\epsilon}{\alpha}}\\
\end{align*}
where we have used that $\frac{L\epsilon}{\alpha}<1$.
Finally,

\begin{align*}
    \|\mathcal{JG}(\theta^*)\|&\leq \lambda\frac{1}{1+\lambda-\lambda \frac{L\epsilon}{\alpha}} (1+\frac{L\epsilon}{\alpha})
\end{align*}

and a sufficient condition for having $\|\mathcal{JG}(\theta^*)\|<1$ is

$$
\lambda\frac{1}{1+\lambda-\lambda \frac{L\epsilon}{\alpha}} (1+\frac{L\epsilon}{\alpha})<1
$$
or equivalently,

$$
\lambda<\frac{\alpha}{2L\epsilon}\enspace.
$$

With this new bound $\lambda<\frac{\alpha}{2L\epsilon}$ which ensures that the operator norm of the Jacobian is smaller than $1$, \ie $\|\mathcal{JG}(\theta^*)\|<1$, we can unroll the remaining steps of their proof to get \Cref{eq:rate_theta}
\end{proof}

\subsubsection{Proof of \Cref{lem:resultindistribution}}

\resultindistribution*

\begin{proof}
    We know that $\theta_*$ locally maximizes $\theta\mapsto\E_{x\sim p_{\theta_*}}\log(p_\theta(x))$ and hence locally minimizes $\theta\mapsto\KL(p_{\theta_*}||p_\theta)$. Hence, $\nabla_\theta\KL(p_{\theta_*}||p_{\theta_*})=0$.
    Furthermore we know that
    \begin{equation*}
        \nabla^2_\theta\KL(p_{\theta_*}||p_\theta) = -\int p_{\theta_*}(x)\nabla^2_\theta \log(p_\theta)dx
    \end{equation*}
    For fixed parameters $\theta$, denote for $s\in [0, 1]$, $\theta_s =s\theta+(1-s)\theta_*$ and $f(s) = \KL(p_{\theta_*}||p_{\theta_s})$.
    We have $f'(0)=0$ and
    $$f''(s)=(\theta-\theta_*)^\top \left(-\int p_{\theta_*}(x)\nabla^2_\theta \log(p_{\theta_{s}})dx\right)(\theta-\theta_*)$$
    Using Taylor expansion with explicit remaining, we know the existence of $s\in [0,1]$ such that $f(1)=f(0)+f'(0)+s^2\frac{f''(s)}{2}$. There remains to bound the spectral norm of $(-\int p_{\theta_*}(x)\nabla^2_{\theta_s} \log(p_\theta)dx)$. Since by assumption the mapping $\theta\mapsto \E_{\pdata}\nabla^2_\theta \log(p_\theta(x))$ is locally continuous, and that the spectral norm is itself continuous, we know that we can bound on a neighborhood of $\theta_*$, $\|\E_{\pdata}\nabla^2_\theta \log(p_\theta(x))\|\leq 2 \|\E_{\pdata}\nabla^2_\theta \log(p_{\theta_*}(x))\|:=2C<\infty$. Furthermore, using that $x\mapsto \nabla_\theta^2\log(p_\theta(x))$ is $L$-Lipschitz (\Cref{ass:assumptionshessian}) and that $\gW(p_{\theta_*}, \pdata)\leq \epsilon$ by assumption, using Kantorovitch-Rubinstein duality we know that
    \begin{equation*}
       \left\|\int p_{\theta_*}(x)\nabla^2_\theta \log(p_{\theta_{s}})dx - \int \pdata(x)\nabla^2_\theta \log(p_{\theta_{s}})dx\right\|\leq \epsilon L
    \end{equation*}
    Putting all things together, we know the existence of a constant $C'$ such that for $\theta$ in a neighborhood of $\theta_*$ (that we can in particular choose convex), we have for $s\leq 1$, $|f''(s)|\leq 2C'\|\theta-\theta_*\|_2^2$
     and hence $\KL(p_{\theta_*}||p_\theta)\leq C'\|\theta_t-\theta_*\|^2$ for $C'<\infty$ on a neighborhood of $\theta_*$. Using the previous \Cref{lem:upperboundlambda}, we deduce the convergence rate:
\begin{equation*}
    \KL(p_{\theta_*}||p_{\theta_t}) = \Tilde{\gO}\left(\left(\frac{\lambda(\alpha+\epsilon L)}{\alpha + \lambda(\alpha - \epsilon L)}\right)^{2t}\right)\enspace.
\end{equation*}
\end{proof}

\cut{
\begin{proof}
    We know that $\theta_*$ locally maximizes $\theta\mapsto\E_{x\sim\pdata}\log(p_\theta(x))$ and hence locally minimizes $\theta\mapsto\KL(\pdata||p_\theta)$. Hence, $\nabla_\theta\KL(\pdata||p_{\theta_*})=0$.
    Furthermore we know that
    \begin{equation*}
        \nabla^2_\theta\KL(\pdata||p_\theta) = -\int \pdata(x)\nabla^2_\theta \log(p_\theta)dx
    \end{equation*}
    For fixed $\theta$, denote for $s\in [0, 1]$, $\theta_s =s\theta+(1-s)\theta_*$ and $f(s) = \KL(\pdata||p_{\theta_s})$.
    We have $f'(0)=0$ and
    $$f''(s)=(\theta-\theta_*)^\top(-\int \pdata(x)\nabla^2_\theta \log(p_{\theta_{s}})dx)(\theta-\theta_*)$$
    Using Taylor expansion with explicit remaining, we know the existence of $s\in [0,1]$ such that $f(1)=f(0)+f'(0)+s^2\frac{f''(s)}{2}$. There remains to bound the spectral norm of $(-\int \pdata(x)\nabla^2_{\theta_s} \log(p_\theta)dx)$. Since by assumption the mapping $\theta\mapsto \E_{\pdata}\nabla^2_\theta \log(p_\theta(x))$ is locally continuous, and that the spectral norm is itself continuous, we know that we can bound on a neighborhood $\|\E_{\pdata}\nabla^2_\theta \log(p_\theta(x))\|\leq 2 \|\E_{\pdata}\nabla^2_\theta \log(p_{\theta_*}(x))\|:=2C<\infty$.
    Therefore, for $\theta$ in a neighborhood of $\theta_*$ (that we can in particular choose convex), we know that  for $s\leq 1$, $|f''(s)|\leq2C\|\theta-\theta_*\|_2^2$
     and hence this brings $\KL(\pdata||p_\theta)\leq \KL(\pdata||p_*)+C\|\theta_t-\theta_*\|^2$ for $C<\infty$ on a neighborhood of $\theta_*$. Using the previous \Cref{lem:upperboundlambda}, we deduce the limit:
$$
 \lim_{t\rightarrow \infty}\KL(\pdata||p_{\theta_t})= \KL(\pdata||p_*) 
    \enspace.
$$
\end{proof}}

\cut{There exists a neighborhood $U$ of $\theta^*$, such that
\begin{equation}
    \mathbb{E}_{x\sim p_{\theta^*}}[\log(\tfrac{p_{\theta^*}(x)}{p_{\theta}(x)})] =: \rm{KL}(p_{\theta^*}||p_{\theta}) \leq \frac{\alpha}{2} \|\theta^* - \theta\|^2 \,,\quad \forall \theta \in U
\end{equation}
\begin{proof}
Let us assume that:
\begin{equation}
    \mathbb{E}_{x\sim p_{\theta^*}}[\nabla^2 \log p_{\theta^*}(x)] \succeq \alpha' I_d
\end{equation}

Note it is true in particular when $\alpha > \epsilon L$.
The function $\theta \mapsto \lambda_{min}(\mathbb{E}_{x\sim p_{\theta^*}}[\nabla^2 \log p_{\theta}(x)]) =
\min_{\|u\|=1}\langle u,\mathbb{E}_{x\sim p_{\theta^*}}[\nabla^2 \log p_{\theta}(x)] u \rangle$ is continuous. Thus, there exists a neighborhood $U$ of $\theta^*$ such that
\begin{equation}
    \lambda_{min}(\mathbb{E}_{x\sim p_{\theta^*}}[\nabla^2 \log p_{\theta}(x)]) \geq \lambda_{min}(\mathbb{E}_{x\sim p_{\theta^*}}[\nabla^2 \log p_{\theta^*}(x)])/2
    \geq \alpha'/2
    \,,\quad \forall \theta \in U\,.
\end{equation}

Let us consider the function $f: t \mapsto  KL(p_{\theta^*}||p_{t\theta+ (1-t)\theta^*})$. We can notice that, $f(0) = 0$ and $f'(0) = 0$ since $f$ achieved its minimum for $t=0.$ Thus we can use the integral form of Taylor's formula to get that ,
\begin{equation}
    f(t)
    = \int_{s=0}^t s f''(s) ds
    = -\int_{s=0}^t  s\langle \theta- \theta^*,F(t)(\theta- \theta^*) \rangle ds
\end{equation}
where $ F(t) := \mathbb{E}_{x\sim p_{\theta^*}}[\nabla^2 \log p_{t\theta+ (1-t)\theta^*}(x)] \succeq \frac{\alpha'}{2}$.
\end{proof}}

\subsection{Experiments}
We recall and detail the general set-up of iteratively retraining on a mixture of real data and curated synthetic samples in \Cref{alg:iterative_training}
\begin{algorithm}[tb]
    \caption{Iterative retraining with curated synthetic data
    }
    \label{alg:iterative_training}
    \SetKwInOut{Input}{input}
    \SetKwInOut{Init}{init}
    \SetKwInOut{Parameter}{param}
    \Input{
        $\mathcal{D}_{\mathrm{real}}:= \{ x_i\}_{i=1}^n$, $\mathcal{A}$
        \tcp*[l]{True data, learning procedure,  }
        }
    \Parameter{
        $T$, $\lambda$, $\beta$
        \tcp*[l]{Number of retraining iterations, proportion of gen. data, reward multiplicative factor}
        }

    $p_{0} = \mathcal{A}(\mathcal{D}_{\mathrm{real}})$
    \tcp*[l]{Learn generative model on true data}

    \For{$t$ in $1, \dots, T$}
    {
        \For{$i$ in $1, \dots, \floor{\lambda \cdot n}$}{
        $\tilde{x}_1,\ldots,\tilde{x}_K \sim p_{t-1}$
        \tcp*[l]{Sample $K$ synthetic data points}
        $\tilde{x}_k$ is selected by a user with probability
        $\frac{e^{r(\tilde x_k)}}{\sum_{j=1}^K e^{r(\tilde x_j)}}\,,\quad 1\leq k\leq K\,.$
            \tcp*[l]{Luce's model}
        $\hat x_i \leftarrow \tilde x_k$
        }

        $\gD_{\mathrm{filtered}} = \left\{\hat{x}_i\right\}_{i=1}^{\floor{\lambda \cdot n}}$
        \tcp*[l]{New filtered dataset}

        $p_{t} = \mathcal{A}(\mathcal{D}_{\mathrm{real}} \cup \mathcal{D}_{\mathrm{filtered}})$
        \tcp*[l]{Generative model is learned on synthetic and true data}
    }
\Return{$p_{T}$}
\end{algorithm}

\label{sec:appendix_experiments}
\subsubsection{MoG and two moons datasets - DDPM}

\xhdr{Experimental details} For both experiments, the learned vector field is parametrized by an MLP of $2$ hidden layers and hidden width $128$. We use a time discretization in $250$ steps. Finally, we retrain the model for $5$ iterations, first only on real data and then on \emph{filtered} synthetic samples from the previous iteration using pairwise comparisons.
We use $5 \cdot 10^3$ initial samples from the real data distribution and $5 \cdot 10^3$ generated samples filtered from $10^4$ generated initial samples.
When mixing, we use equal fractions of real and filtered samples. For the two moons we add a Gaussian noise with standard deviation $1.10^{-1}$.

\begin{figure}[H]
    \centering
    \includegraphics[width=\linewidth]{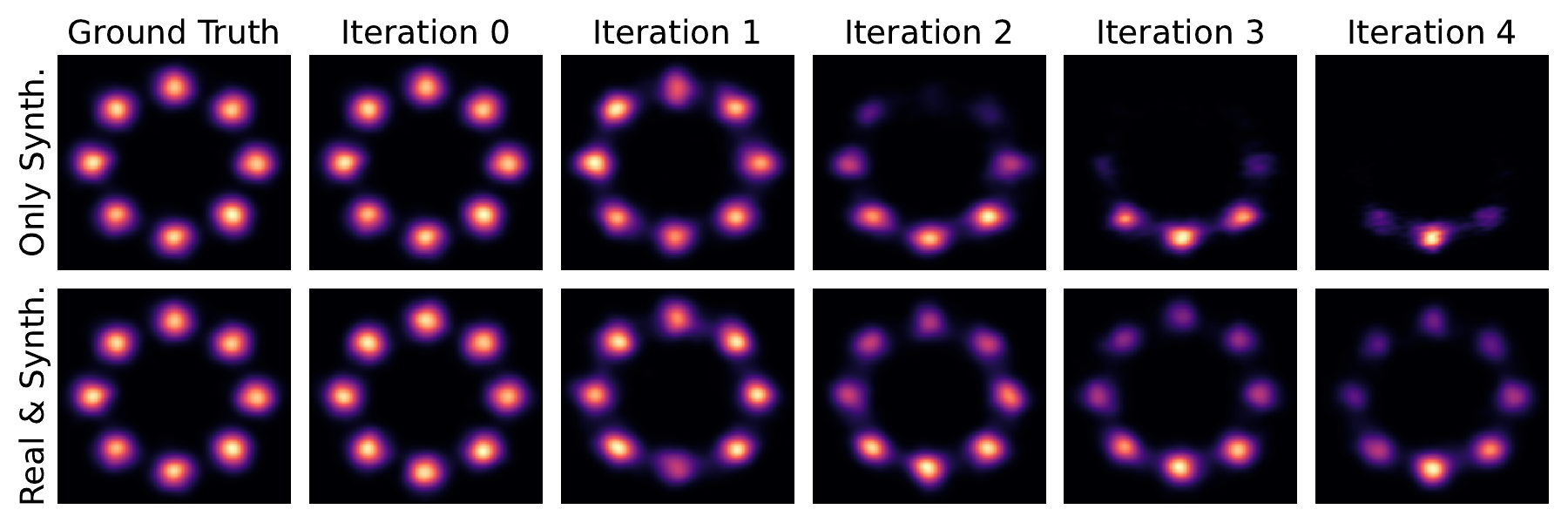}
    \caption{\textbf{Mixture of Gaussians.} Iterative retraining on the two moons dataset for $5$ iterations. On the top row, we display the fully filtered synthetic loop, and below we use a mixture of real and filtered data.}
    \label{fig:mog}
\end{figure}

\begin{figure}[H]
    \centering
    \includegraphics[width=\linewidth]{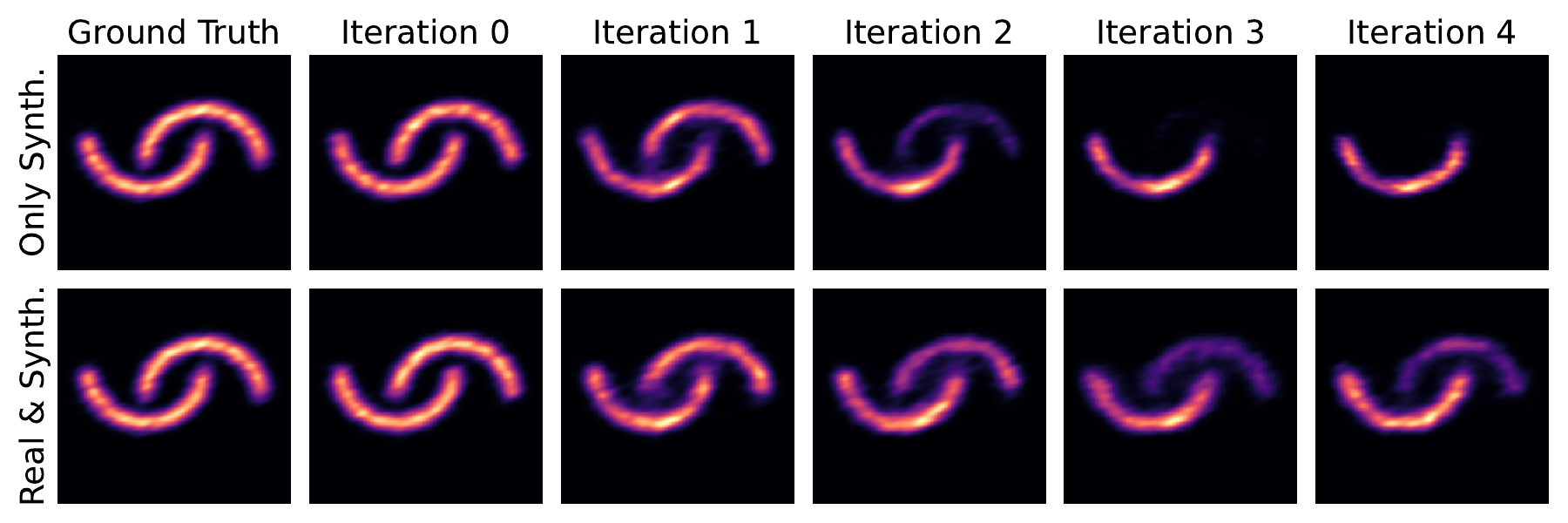}
    \caption{\textbf{Two moons.} Iterative retraining on the two moons dataset for $5$ iterations. On the top row, we display the fully filtered synthetic loop, and below we use a mixture of real and filtered data.}
    \label{fig:two_moons}
\end{figure}

\subsubsection{FID, precision, recall}

We measured FID, precision and recall for the three different settings on CIFAR10 presented in \Cref{sec:experiments}, \ie a) filtering based on the probability of a classifier on class $0$ of planes (\Cref{fig:metrics_class_0}), b) filtering based on the confidence of the classifier (\Cref{fig:metric_three_runs_fully_synth}) and c) filtering based on the confidence of the classifier and using a mixture of real data and filtered synthetic samples at each retraining step (\Cref{fig:metric_mix_real_synth}).

In the first two settings, we observe that the FID dramatically increases during retraining. We want to point out that it is not only due to a degradation in quality of the generated samples but also and mostly from the inequalities of the class proportions emerging during retraining. A clear indicator of this is the correlation between the FID behavior in \Cref{fig:metrics_class_0} and the behavior of the proportion of class $0$ shown in \Cref{fig:fully_synth}: the FID stabilizes at the end of the retraining loop when the proportion of class $0$ reaches its maximum. A second interesting fact is that in all three settings, the precision increases, which hints that filtering does not necessarily degrades the quality of generated samples in our case. Additionally, we can clearly see the impact on stability of real data on \Cref{fig:metric_mix_real_synth} where the FID witnesses much smaller variations compared to \Cref{fig:metric_three_runs_fully_synth} and \Cref{fig:metric_three_runs_fully_synth}. Interestingly, we see on \Cref{fig:metric_three_runs_fully_synth} that using the confidence of the classifier as a reward function implies a bigger increase of the precision than on \Cref{fig:metrics_class_0} or \Cref{fig:metric_mix_real_synth}, which correlates with the intuition that confidence is linked to precision. Finally, notice that the three runs on \Cref{fig:metric_three_runs_fully_synth} have small variance, as we have already highlighted.

\begin{figure}[H]
    \centering
    \includegraphics[width = \linewidth]{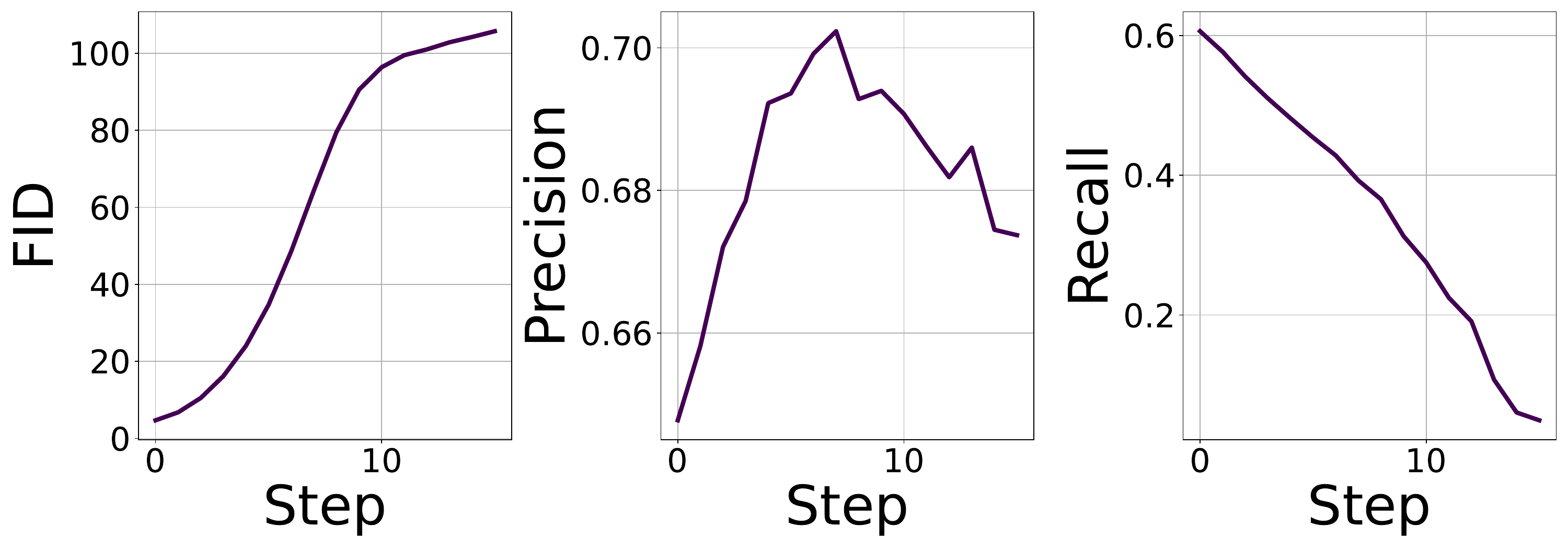}
    \caption{FID, precision and recall when retraining with filtering and  $r(x) = - \gamma q_0(x), \ \gamma = 5$}
    \label{fig:metrics_class_0}
\end{figure}

\begin{figure}[H]
    \centering
    \includegraphics[width = \linewidth]{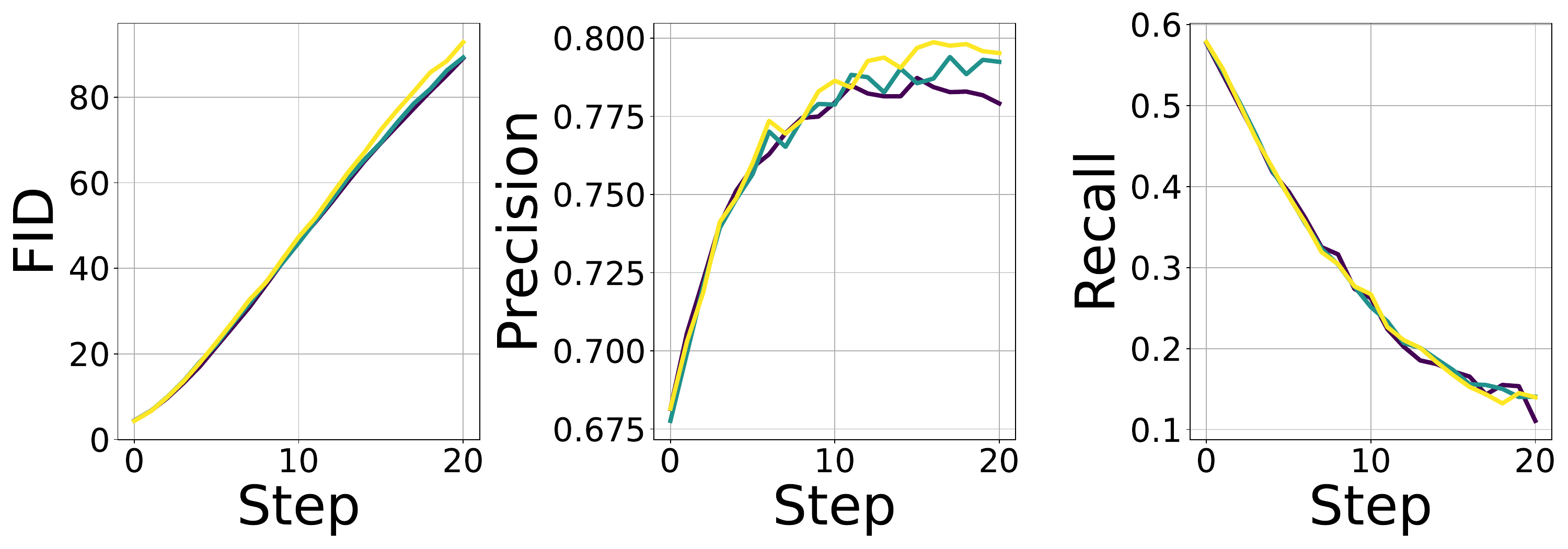}
    \caption{FID, precision and recall when retraining with filtering and  $r(x) =\gamma \argmax_{0\leq i \leq 9}p_i(x), \ \gamma=15$}
    \label{fig:metric_three_runs_fully_synth}
\end{figure}

\begin{figure}[H]
    \centering
    \includegraphics[width = \linewidth]{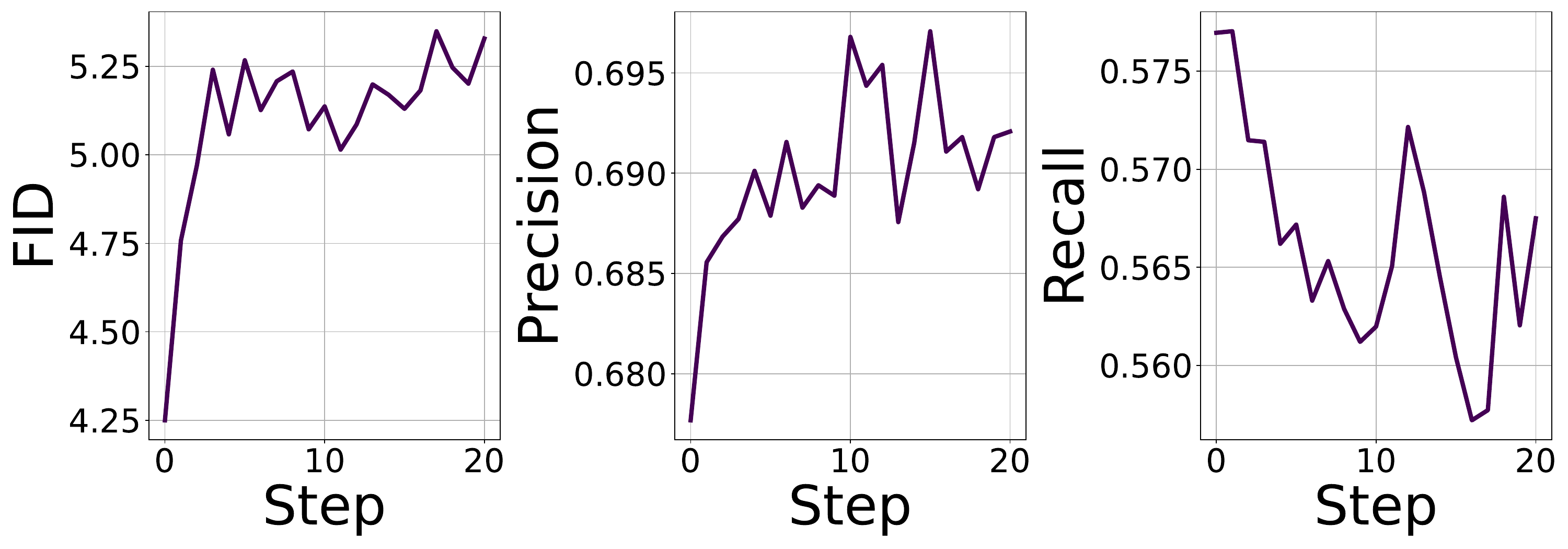}
    \caption{FID, precision and recall when retraining with filtering and  $r(x) =\gamma \argmax_{0\leq i \leq 9}p_i(x), \ \gamma=15$ and reusing real data at each step}
    \label{fig:metric_mix_real_synth}
\end{figure}

\begin{figure}[H]
    \centering
    \includegraphics[width=1\linewidth]{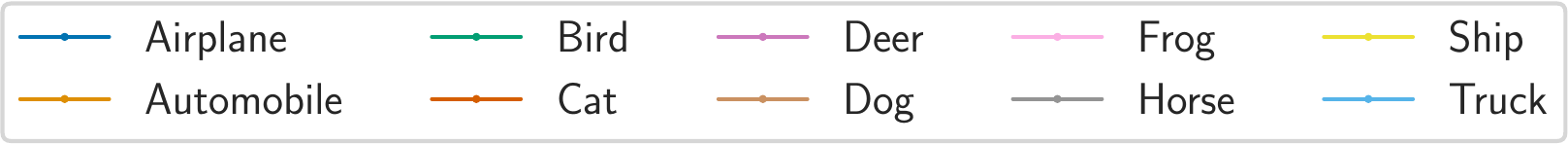}
    \includegraphics[width=1\linewidth]{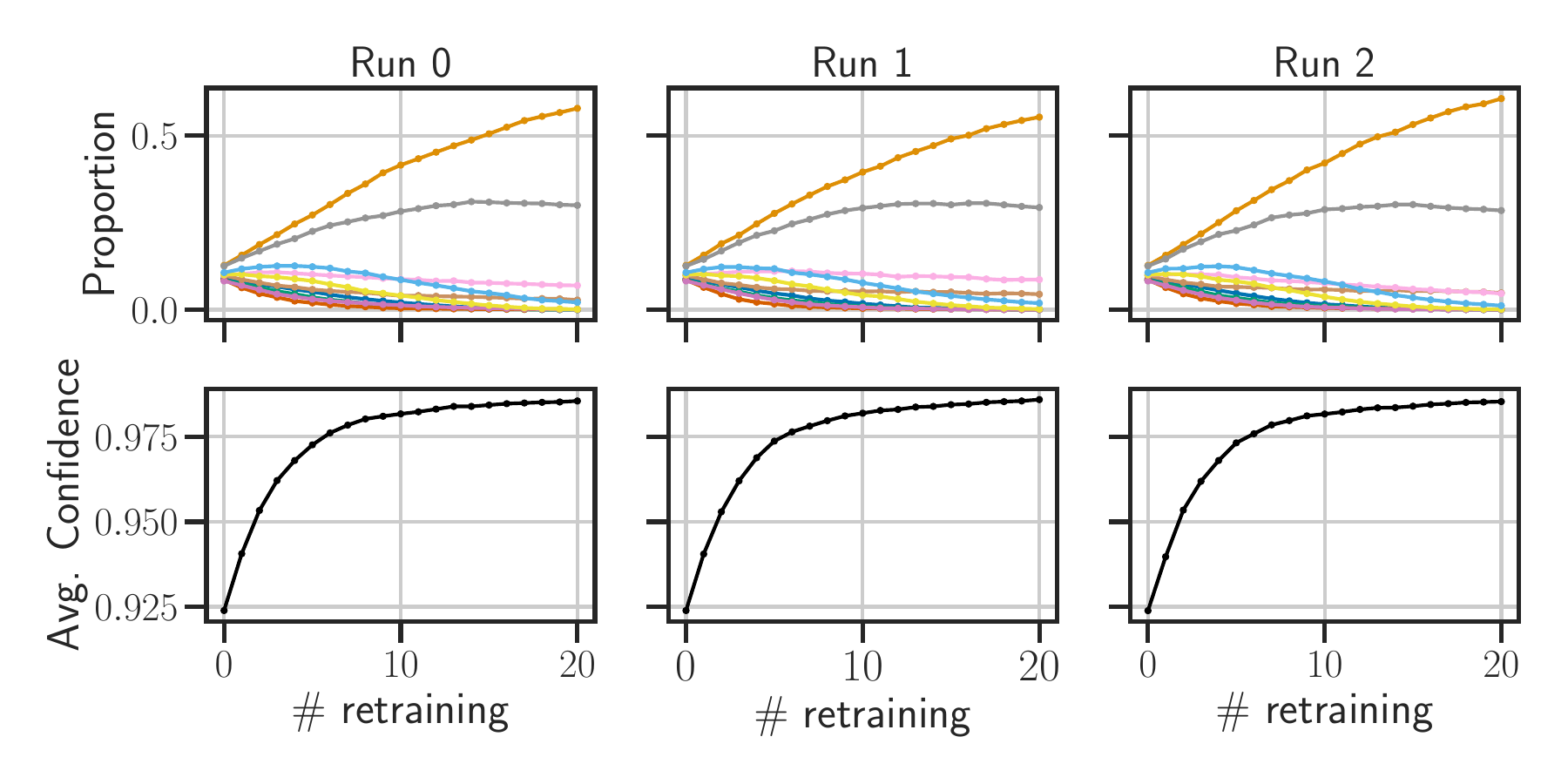}
    \caption{\textbf{CIFAR-10}. Evolution of the proportion of the classes and the average reward when filtering based on the confidence of a classifier for three independent runs. The curves have small variance which supports our results when only one run was reported due to the high compute costs of retraining a generative models multiple times.}
    \label{fig:confidence_std}
\end{figure}

\subsubsection{Compute Cost}
\label{sec:compute_cost}
Experiments on synthetic data (mixture of Gaussians and two moons) ran on a single GPU in a few minutes. However, retraining with filtering on CIFAR10 was more costly. On a A100 GPU of 40GB RAM and using $4$ workers with total $32$ GB RAM, retraining for $20$ iterations with generation of $50000$ samples took about $22$ hours.

\subsection{Examples of sets of four generated images on MidJourney and Stable Diffusion 2.1}
\label{app_sub_sets_images}

We show in \Cref{fig_example_choice} two sets of four images generated respectively with Midjourney and Stable Diffusion. In both cases, users can choose their preferred image out of the $4$ proposed and more specifically in the case of Midjourney, \emph{upscale} it.

\begin{figure}[H]
    \centering
    \begin{subfigure}[t]{0.49\textwidth}
        \centering
        \includegraphics[width=1\linewidth]{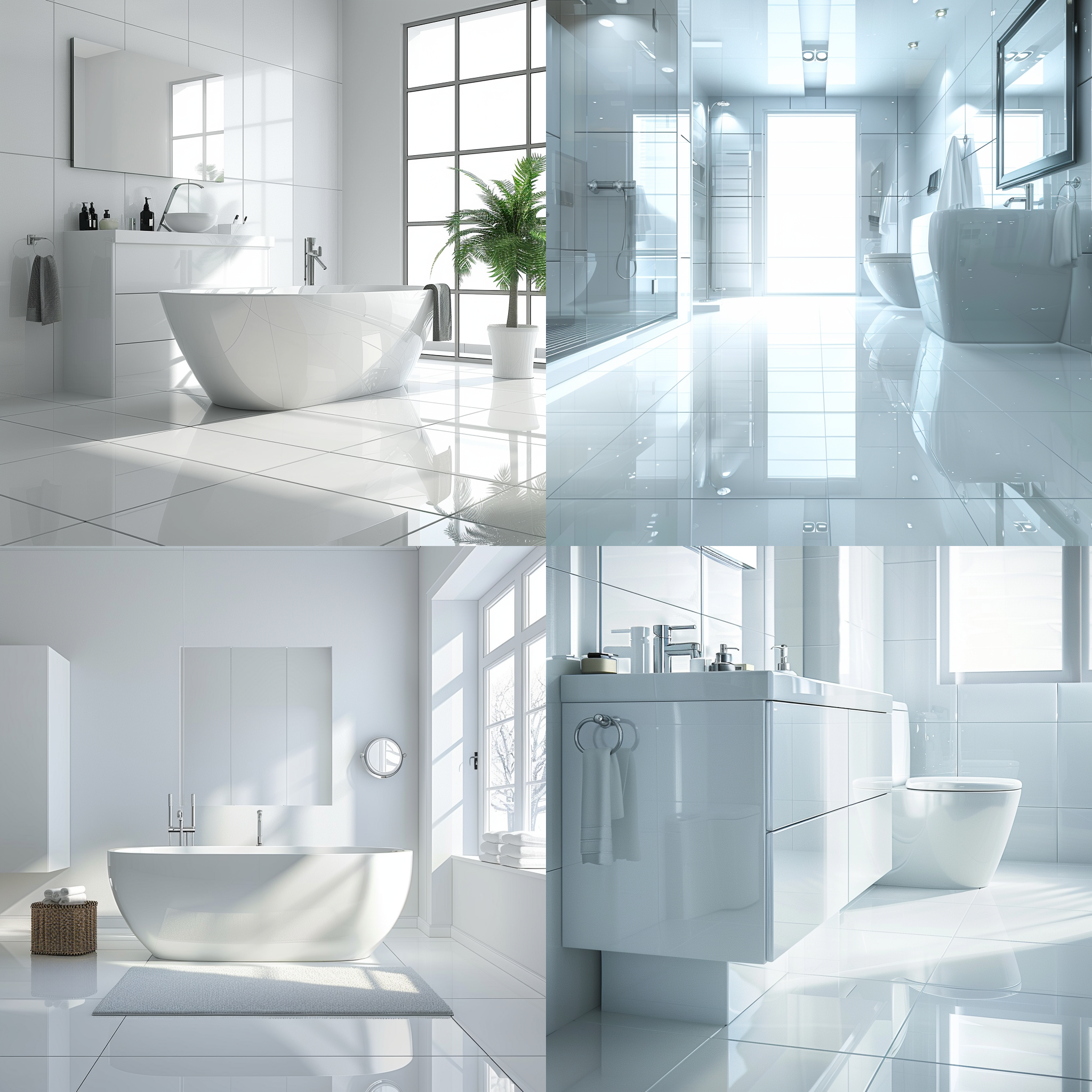}
        \caption{\textbf{Midjourney.} Images from Midjourney discord, generated with the prompt ``Modern and white bathroom, clean and shiny, high resolution, a real scene".}
        \label{fig_midjourney}
    \end{subfigure}%
    \hfill
    \begin{subfigure}[t]{0.49\textwidth}
        \centering
        \includegraphics[width=1\linewidth]{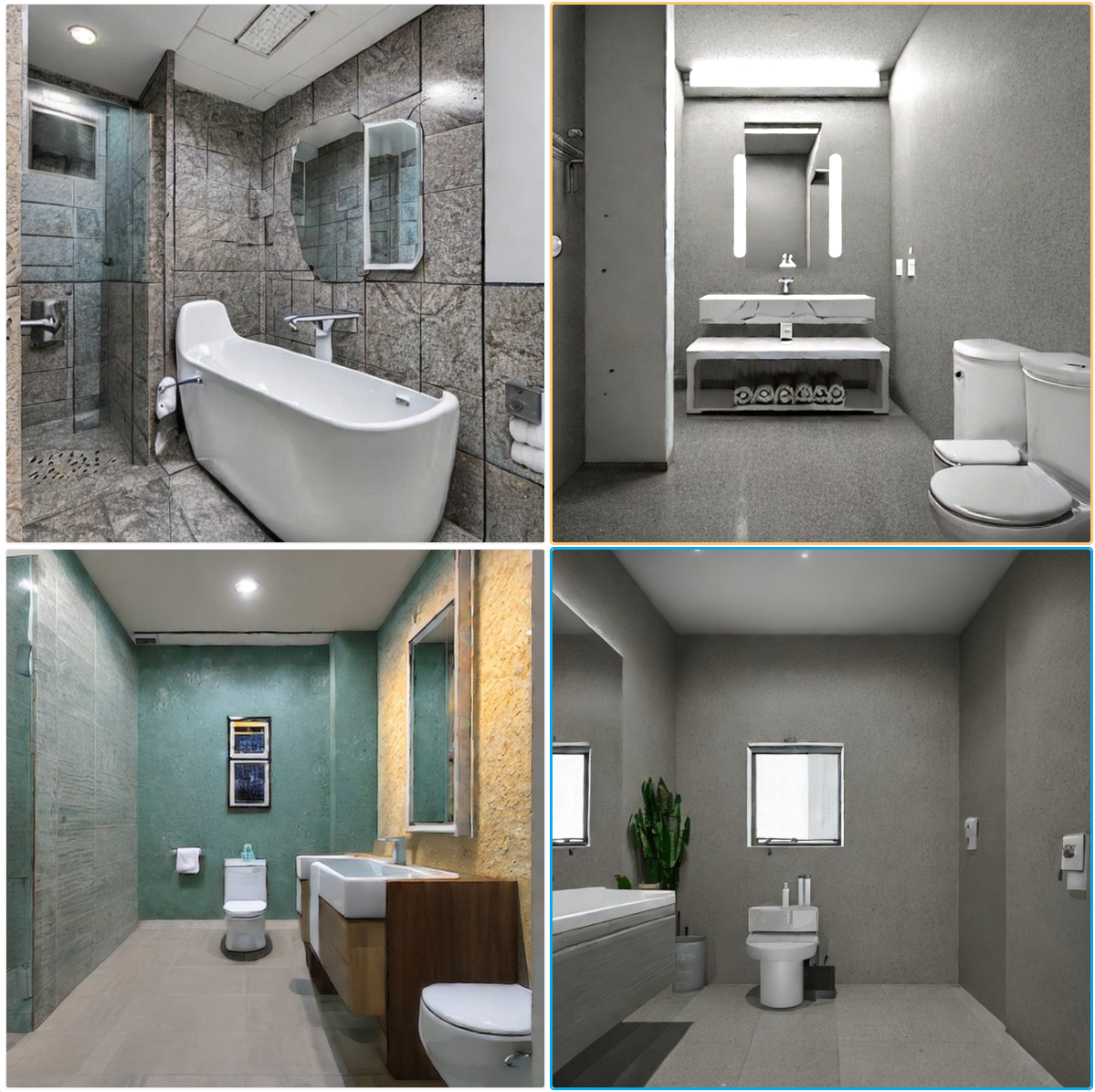}
        \caption{
        \textbf{Stable Diffusion.}
        Four images were generated using Stable Diffusion 2.1 Hugging Face implementation \citep{HuggingFaceStableDiffusion21v2}, with the prompt ``a bathroom'".
        }
    \label{fig_stable_diff}
    \end{subfigure}
    \caption{
    Two sets of four images were generated using two different generative models.
    For Midjourney (\Cref{fig_midjourney}), users can select which image to \emph{upscale}.
    The upscaled images are then incorporated into the JourneyDB dataset \citep{journeydb}.
    For Stable Diffusion, users can choose the preferred generated image.
    }
    \label{fig_example_choice}
\end{figure}


\end{document}